\documentclass[11pt]{article} 

\usepackage{arxiv_preprint}

\usepackage{amsmath,amsfonts,bm}

\def\eqref#1{equation~\ref{#1}}

\def\1{\bm{1}}

\DeclareMathAlphabet{\mathsfit}{\encodingdefault}{\sfdefault}{m}{sl}
\SetMathAlphabet{\mathsfit}{bold}{\encodingdefault}{\sfdefault}{bx}{n}

\newcommand{\R}{\mathbb{R}}

\newcommand{\NS}{\mathcal{N}_5}
\newcommand{\norm}[1]{\lVert#1\rVert}

\newcommand{\Compass}{ExpertMuon-Compass}

\usepackage[T1]{fontenc}
\usepackage{fix-cm}
\usepackage{amsmath,amssymb,booktabs,array,graphicx,mathtools,longtable}
\renewcommand{\eqref}[1]{\textup{(\ref{#1})}}
\usepackage{float,algpseudocode,placeins}
\floatstyle{ruled}
\newfloat{algorithm}{tbp}{loa}
\floatname{algorithm}{Algorithm}
\usepackage{tikz}
\usetikzlibrary{arrows.meta,positioning,calc}
\usepackage{xcolor}
\usepackage{amsthm}
\newtheorem{compasstheorem}{Theorem}
\newtheorem{compasslemma}{Lemma}[section]
\newtheorem{compassproposition}{Proposition}[section]

\theoremstyle{definition}

\usepackage{hyperref,url}

\title{ExpertMuon-Compass: Alignment-Guided\\Step Sizes for Mixture-of-Experts Training}

\author{%
Omatharv Bharat Vaidya, Ashwin Vinod\textsuperscript{*}, Pedram Akbarian\textsuperscript{*},\\
Aditya Sai Ellendula\textsuperscript{*}, Connor T. Jerzak, Nhat Ho\\[4pt]
{\small University of Texas at Austin}\\[2pt]
{\small \textsuperscript{*}Equal contribution}}

\usepackage{bookmark}
\renewcommand{\arxivshorttitle}{ExpertMuon-Compass}
\hypersetup{pdftitle={ExpertMuon-Compass: Alignment-Guided Step Sizes for Mixture-of-Experts Training},
  pdfauthor={Omatharv Bharat Vaidya, Ashwin Vinod, Pedram Akbarian, Aditya Sai Ellendula, Connor T. Jerzak, Nhat Ho}}

\begin{document}

\maketitle

\begin{abstract}
Mixture-of-experts (MoE) language models send each token to a few experts, so each expert is trained on a different part of the data, and this part changes during training. With a shared learning rate, Muon applies updates of roughly the same size to expert matrices of the same shape, even when an expert's update is poorly aligned with its current gradient. We here propose ExpertMuon-Compass (Compass), which multiplies the Muon step of each expert by two factors. A family factor compares the cosine between the expert's orthogonalized update and its gradient with the same cosine for the other experts in its layer. A scalar radius aggregates the alignment between corresponding rows of the update and gradient into one step-length multiplier. Compass keeps the update direction and the momentum buffer of Muon. In pretraining on FineWeb-Edu, Compass with Nesterov momentum on all matrices performs as well as or better than Muon, NorMuon, and other optimizers, with weight decay matched to NorMuon in the longer runs. Adding its factors to NorMuon gives the same or a lower loss than NorMuon. Compass is the most effective when the data seen by each expert varies during training, for example, as when the languages of a multilingual corpus arrive in separate blocks. With Compass, the expert load stays balanced, and the router assigns tokens to experts more decisively. We also prove a perturbation bound for the two factors.
\end{abstract}

\section{Introduction}

Mixture-of-experts (MoE) language models increase capacity by activating a small subset of expert networks for each token \citep{shazeer2017moe,fedus2022switch}. The router also decides what an expert learns from, since the expert receives gradients only from the tokens assigned to it, and these assignments shift during training. Muon \citep{jordan2024muon} is now used to train large MoE models \citep{liu2025muonscalable}. It orthogonalizes the momentum of a weight matrix before the update, so the update has a nearly fixed norm. With a shared learning rate, the expert matrices then move by about the same amount at each step, even when an update agrees only weakly with the current gradient. Weak alignment can occur when the tokens routed to an expert change while its momentum still reflects earlier tokens.

Current MoE training handles the experts through the router and the learning rate. Load balancing adjusts the router with an auxiliary loss or with bias updates \citep{fedus2022switch,wang2024lossfree}. The learning rate is usually shared across all experts \citep{muennighoff2024olmoe,liu2025muonscalable}, and Excitation scales an expert's update by how often it is used \citep{shaier2026excitation}. In these approaches, the step size of an expert is independent of how well its update agrees with its current gradient. We propose \Compass{} (Compass), which sets the step of each expert based on this agreement, measured on the orthogonalized update that Muon applies. Compass computes the cosine between the update and the gradient of an expert matrix and compares it with the cosines of the other experts in the layer. An expert with a below-average cosine takes a smaller step than an expert with an above-average cosine, and the factors average to one over the experts. A second factor, the scalar radius, aggregates the alignment between corresponding rows of the update and gradient into a multiplier that can shorten the step while preserving its direction. Both factors are scalars. Compass thus keeps the update direction and the single momentum buffer of Muon, and the factors need only inner products and row norms. They can also be added to other Muon-based updates such as NorMuon \citep{li2025normuon}.

Changing the data that each expert sees during training gives a direct test of when expert step sizes matter under distribution shift (as occurs, e.g., in continual learning). Hence, we train an eight-language MoE model on the same token batches in two orders, once shuffled and once grouped into repeated single-language blocks. With shuffled batches, Compass and Nesterov Muon reach nearly the same test loss. The blocked order changes the data seen by the experts at each language switch, and here Compass reaches a lower test loss than Nesterov Muon in all eight languages (Figure~\ref{fig:order-effect}). Adding the Compass factors to the update of NorMuon also lowers its test loss in every language.

\begin{figure}[t]
\centering
\includegraphics[width=\linewidth]{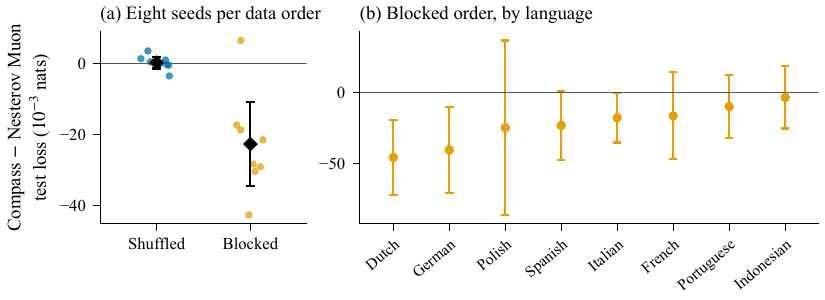}
\caption{\textbf{Compass under language-blocked data order.} Final test loss of Compass minus Nesterov Muon on an eight-language corpus. (a) Each point is one seed; diamonds and bars show the mean and 95\% confidence interval over eight seeds. (b) Mean difference and 95\% confidence interval for each language under blocked order, in the same units. Compass uses Nesterov momentum for expert matrices and heavy-ball momentum for attention and router matrices, and learning rates are tuned separately for each method and data order (Section~\ref{sec:order-results}).}
\label{fig:order-effect}
\end{figure}

Our contributions are as follows. (1) Compass, a step-size rule for the expert matrices of Muon that uses the cosine between each expert's orthogonalized update and its gradient and keeps the update direction. (2) Pretraining experiments on FineWeb-Edu with training budgets from 8.2M to 1.05B tokens. At the shortest budget, we compare Compass with Nesterov Muon, NorMuon, heavy-ball Muon, Excitation, Magma, and AdamW \citep{jordan2024muon,li2025normuon,shaier2026excitation,joo2026magma,loshchilov2019adamw}; at longer budgets, we compare it with Nesterov Muon and NorMuon. We tune the learning rate of every method in the same way, and our ablations remove each factor or replace both factors with a constant expert scale. With Nesterov momentum on all matrices, and with weight decay matched to NorMuon in the longer runs, Compass performs as well as or better than these optimizers, and adding the Compass factors to NorMuon gives the same or a lower loss than NorMuon. (3) An analysis of training. Compass keeps the expert load balanced and makes the router assign tokens to experts more decisively. In the language-blocked experiment, its factors spread out earliest in each block, soon after the data seen by the experts change. (4) A bound on how much the two factors can change the update when the gradients change.
\section{Related work}

We review matrix optimizers, MoE training, and work on alignment and expert updates.

\paragraph{Matrix optimizers.}
Muon orthogonalizes the momentum of each weight matrix with a Newton-Schulz iteration \citep{jordan2024muon}. Its reference implementation uses Nesterov momentum \citep{jordan2024muon}, and in large models its behavior depends on how the update is scaled, how parameters are grouped, and the numerical precision \citep{liu2025muonscalable,khona2026soapmuon}. NorMuon adds row-wise second-moment normalization to the orthogonalized update \citep{li2025normuon}. SOAP runs Adam in the eigenbasis of Shampoo's preconditioner \citep{vyas2024soap}. Compass keeps the single momentum buffer of Muon and adds scalar factors for expert matrices.

\paragraph{Training mixture-of-experts models.}
Work on training MoE models has focused mostly on the router. MoE training usually balances expert load with an auxiliary loss \citep{fedus2022switch} or with router bias updates \citep{wang2024lossfree}, and our models use the bias updates. OLMoE is trained with AdamW \citep{muennighoff2024olmoe}, and Moonlight trains an MoE model with Muon \citep{liu2025muonscalable}. Load-balancing methods act on the router, and OLMoE and Moonlight use one learning rate for all experts. Compass sets the step of each expert from its own update and gradient.

\paragraph{Alignment and expert updates.}
Alignment between an update and the gradient has also been used to modify the update. Cautious optimizers use the sign alignment between an update and the gradient to mask individual coordinates \citep{liang2024cautious}. Magma combines block-level alignment with stochastic masking \citep{joo2026magma}. Excitation sets expert update multipliers from expert utilization \citep{shaier2026excitation}. OMoE encourages diverse expert representations through orthogonal optimization \citep{liu2024omoe}. Compass measures alignment after orthogonalization and compares an expert with the matching matrices of the other experts in its layer. It also combines the row cosines of a matrix into a single scalar, preserving the direction computed by Muon.

\section{ExpertMuon-Compass}
\label{sec:method}

Compass keeps the Muon direction of each expert matrix and sets its length using two factors computed from alignment, and Section~\ref{sec:compass-theory} bounds how much these factors can change the update.

\subsection{Update direction and parameter groups}

We group the matching projection matrices of all experts in one layer into a \emph{family}; for example, the first projection of every expert in a layer forms one family. Expert $e$ has weights $W_{e,t}\in\R^{m\times n}$ and clipped gradient $G_{e,t}$, and rows index the output dimension. With $\beta=0.95$ and $M_{e,0}=0$, the momentum and the Nesterov input are
\begin{equation}
 M_{e,t}=\beta M_{e,t-1}+G_{e,t},\qquad
 D_{e,t}=G_{e,t}+\beta M_{e,t}.
 \label{eq:momentum}
\end{equation}
The Nesterov input uses the updated buffer. Five Newton-Schulz iterations with the coefficients of \citet{jordan2024muon} give the orthogonalized direction $O_{e,t}=\NS(D_{e,t})$. In exact arithmetic, they keep the singular vectors of $D_{e,t}$ and bring its singular values closer together (Appendix~\ref{sec:extended-arithmetic}). As in \citet{liu2025muonscalable}, we multiply this direction by the shape factor $s=0.2\sqrt{\max(m,n)}$, which makes the size of the update close to that of an AdamW update, and write $U_{e,t}=sO_{e,t}$.

Compass uses the Nesterov input for expert matrices. For attention and router matrices, it orthogonalizes either the heavy-ball momentum $M_t$ (written N/H) or the Nesterov input $G_t+\beta M_t$ (written N/N); in both cases, the family factor $c$ and the scalar radius $\alpha$ defined below are the same. The N/N configuration matches the momentum convention of the Nesterov Muon baseline. We use AdamW for embeddings, the untied output head, biases, and normalization parameters.

\subsection{Expert and row scaling}

We drop the step index $t$ of the matrices and scores in this subsection. For arrays of equal shape, define
\begin{equation}
 \operatorname{cos}(A,B)=
 \frac{\langle A,B\rangle_F}{\norm A_F\norm B_F}.
 \label{eq:cosine}
\end{equation}
We measure alignment on the unscaled direction $O_e$. The score of expert $e$ is $a_e=[\operatorname{cos}(O_e,G_e)]_+$, where $[x]_+=\max(0,x)$, and a bar denotes the mean over the family. To avoid division by zero, we add small positive constants to the scores and denominators (Appendix~\ref{sec:numerics}). The family factor is
\begin{equation}
 u_e=a_e/\overline a,\qquad
 v_e=\min(1,u_e),\qquad
 c_e=v_e/\overline v.
 \label{eq:expertgate}
\end{equation}
The first normalization compares an expert with its family, and the cap limits this ratio at one. The second normalization makes the family multipliers average to one, setting their relative scale without fixing the total norm of the expert updates. Capping gives equal preliminary weights to above-average alignment scores, limiting the influence of unusually large scores on relative expert step sizes.

The second factor works on the rows of each expert matrix. Each row $r$ receives the score $b_{er}=[\operatorname{cos}(O_{er},G_{er})]_+$ in the same way. With $\overline b_e$ the mean over the rows of expert $e$, define
\begin{equation}
 q_{er}=\min(1,b_{er}/\overline b_e),\qquad
 h_{er}=1+\lambda_t(q_{er}-1),
 \label{eq:rowgate}
\end{equation}
where $\lambda_t=\eta_t/\eta_{\max}$ is the current learning rate divided by its peak value, so the row factors move toward one as the learning rate decays. The scalar radius is the norm of the row-weighted matrix divided by the norm of $O_e$,
\begin{equation}
 \alpha_e=\sqrt{\frac{\sum_rh_{er}^2\norm{O_{er}}_2^2}
 {\norm{O_e}_F^2}},\qquad
 V_e=c_e\alpha_eU_e,
 \label{eq:radius}
\end{equation}
so that
\begin{equation}
 \norm{\alpha_eO_e}_F
 =\norm{\operatorname{diag}(h_e)O_e}_F.
 \label{eq:normmatch}
\end{equation}
Weighting the rows directly would change the direction of $O_e$. The scalar $\alpha_e$ keeps the direction and takes only the length of the row-weighted matrix. It lies between zero and one. We call $c_e\alpha_e$ the expert scale, and it can exceed one because $c_e$ can. Figure~\ref{fig:method} shows the whole update for one expert matrix.

\begin{figure}[t]
\centering
\begin{tikzpicture}[x=1cm,y=1cm,>=Latex,
box/.style={draw=black!65,rounded corners=2pt,align=center,font=\scriptsize,minimum height=.85cm,fill=blue!4},
arrow/.style={->,thick,draw=black!65}]
\node[box,text width=2.15cm] (grad) at (0,0) {Router-selected tokens\\Backward pass and clipping\\Gradient $G_e$};
\node[box,text width=2.5cm] (dir) at (3.15,0) {Expert Nesterov\\$\mathcal{N}_5(G_e+\beta M_e)$\\Matrix direction $O_e$};
\node[box,text width=2.6cm,fill=orange!8] (whole) at (6.5,.82) {Matrix cosine with $G_e$\\Family factor $c_e$};
\node[box,text width=2.6cm,fill=teal!8] (row) at (6.5,-.82) {Row cosines with $G_e$\\Scalar radius $\alpha_e$};
\node[box,text width=2.5cm,fill=teal!12] (update) at (10.1,0) {One scalar per matrix\\$V_e=s c_e\alpha_e O_e$\\Decoupled weight decay};
\draw[arrow] (grad)--(dir);
\draw[arrow] (dir.east)--(4.75,0)|-(whole.west);
\draw[arrow] (dir.east)--(4.75,0)|-(row.west);
\draw[arrow] (whole.east)--(8.4,.82)|-(update.west);
\draw[arrow] (row.east)--(8.4,-.82)|-(update.west);
\draw[arrow] (dir.north)--(3.15,1.65)--node[above,font=\scriptsize] {Preserve the computed direction $O_e$}(10.1,1.65)--(update.north);
\end{tikzpicture}
\caption{\textbf{The Compass update for one expert matrix.} Alignment is measured on the orthogonalized direction $O_e$. The family factor $c_e$ compares whole matrices within a family, and the scalar radius $\alpha_e$ summarizes row-level alignment. Both scale the same direction. Router load balancing is handled separately by bias updates.}
\label{fig:method}
\end{figure}

\subsection{Complete update}

With learning rate $\eta_t$ and decoupled weight decay $\omega$, each expert matrix is updated as
\begin{equation}
 W_{e,t+1}=(1-\eta_t\omega)W_{e,t}-\eta_t V_{e,t}.
 \label{eq:update}
\end{equation}
Experts that receive zero tokens in a step are excluded from the family means and keep their weights and momentum. Algorithm~\ref{alg:compass} in Appendix~\ref{sec:new-details} summarizes the update, and Appendix~\ref{sec:numerics} gives the constants and the FP32 order of operations.

\subsection{Perturbation bound for the two factors}
\label{sec:compass-theory}

Both factors divide a score by a mean score, and a small mean can amplify errors in the scores. We therefore bound how much the update at one step changes when the directions and the scores change, for example, when the gradients come from a different batch. We drop the step index and assume exact arithmetic, with the small positive constants of Appendix~\ref{sec:numerics} added to the scores. The family factor of Equation~\eqref{eq:expertgate} equals $c_e=\min(a_e,\overline a)/\mu$, where $\mu=\overline a\,\overline v$ is the family mean of the capped scores $\min(a_e,\overline a)$, and the row factors divide the row scores by their means $\nu_e=\overline b_e$. A tilde marks the quantities computed from a second family of directions and gradients.

\begin{compasstheorem}[Perturbation bound for the family factor and radius]
\label{thm:cth-stability}
Let $V_e$ and $\widetilde V_e$ be the updates of Equation~\eqref{eq:radius} for a family of $E$ matrices, computed from $(O_e,G_e)$ and $(\widetilde O_e,\widetilde G_e)$ with the same shape factor $s$ and schedule multiplier $\lambda\in[0,1]$. Let
\[
 \delta_O=\max_e\|O_e-\widetilde O_e\|_F,\qquad
 \delta_a=\max_e|a_e-\widetilde a_e|,\qquad
 \delta_b=\max_{e,r}|b_{er}-\widetilde b_{er}|,\qquad
 R=\max_e\|\widetilde O_e\|_F,
\]
$A=\min\{\mu,\widetilde\mu\}$ and $B=\min_e\min\{\nu_e,\widetilde\nu_e\}$. Then $A,B>0$ and
\begin{equation}
 \frac1E\sum_e\|V_e-\widetilde V_e\|_F
 \leq s\left(3\delta_O+\frac{2R\delta_a}{A}
                 +\frac{2\lambda R\delta_b}{B}\right).
 \label{eq:cth-stability}
\end{equation}
\end{compasstheorem}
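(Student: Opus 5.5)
The plan is to split the difference of updates into a part controlled by the radius and a part controlled by the family factor. The pieces involving the radius are bounded uniformly in $e$, and the pieces involving the family factor are bounded only on average over the family. Writing $V_e=sc_e\alpha_eO_e$, I would use
\[
 V_e-\widetilde V_e
 = s\,c_e\bigl(\alpha_eO_e-\widetilde\alpha_e\widetilde O_e\bigr)
 + s\,(c_e-\widetilde c_e)\,\widetilde\alpha_e\widetilde O_e .
\]
For the first term I would show $\|\alpha_eO_e-\widetilde\alpha_e\widetilde O_e\|_F\le 3\delta_O+2\lambda R\delta_b/B$ for every $e$. Since $\overline c=1$ by construction and $c_e\ge0$, averaging $c_e$ times this uniform bound over $e$ leaves the bound unchanged. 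For the second term, $\widetilde\alpha_e\le1$ and $\|\widetilde O_e\|_F\le R$, so it suffices to show $\frac1E\sum_e|c_e-\widetilde c_e|\le 2\delta_a/A$. Positivity of $A$ and $B$ comes from the small positive constants added to the scores (Appendix~\ref{sec:numerics}): every score, and hence every mean of capped scores, is bounded below by a positive number.

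\textbf{Radius term.} First, $\|\alpha_eO_e-\widetilde\alpha_e\widetilde O_e\|_F\le\alpha_e\|O_e-\widetilde O_e\|_F+|\alpha_e-\widetilde\alpha_e|\,\|\widetilde O_e\|_F$, and the first piece is at most $\delta_O$ because $\alpha_e\le1$. For the second piece I would use Equation~\eqref{eq:normmatch}, $\alpha_e\|O_e\|_F=\|\operatorname{diag}(h_e)O_e\|_F$, and insert intermediate quantities:
\[
 \bigl|\alpha_e\|\widetilde O_e\|_F-\widetilde\alpha_e\|\widetilde O_e\|_F\bigr|
 \le \alpha_e\bigl|\|\widetilde O_e\|_F-\|O_e\|_F\bigr|
 +\bigl|\|\operatorname{diag}(h_e)O_e\|_F-\|\operatorname{diag}(h_e)\widetilde O_e\|_F\bigr|
 +\|\operatorname{diag}(h_e-\widetilde h_e)\widetilde O_e\|_F .
\]
Since $0\le h_{er}\le1$, the first two pieces are each at most $\delta_O$. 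The third piece is at most $\lambda\max_r|q_{er}-\widetilde q_{er}|\,R$.

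To control the row factors I would write $q_{er}=\min(b_{er},\nu_e)/\nu_e$, with $\nu_e=\overline b_e$ and $|\nu_e-\widetilde\nu_e|\le\delta_b$. Then
\[
 |q_{er}-\widetilde q_{er}|
 \le\frac{|\min(b_{er},\nu_e)-\min(\widetilde b_{er},\widetilde\nu_e)|}{\nu_e}
 +\min(\widetilde b_{er},\widetilde\nu_e)\frac{|\nu_e-\widetilde\nu_e|}{\nu_e\widetilde\nu_e}
 \le\frac{2\delta_b}{B},
\]
using that $\min$ is $1$-Lipschitz in each argument and that $\min(\widetilde b_{er},\widetilde\nu_e)\le\widetilde\nu_e$. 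Collecting the pieces gives $3\delta_O+2\lambda R\delta_b/B$.

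\textbf{Family-factor term.} This is the step I expect to be the main obstacle, because $c_e$ divides by a family mean and a per-expert bound would bring in $\max_e\widetilde m_e/\widetilde\mu$, which is not controlled. I would therefore bound the factor only on average. Let $m_e=\min(a_e,\overline a)$, so that $c_e=m_e/\mu$ and $\overline m=\mu$. Since $|\overline a-\overline{\widetilde a}|\le\delta_a$, we have $|m_e-\widetilde m_e|\le\delta_a$ and hence also $|\mu-\widetilde\mu|\le\delta_a$. The same quotient split as for the rows gives
\[
 |c_e-\widetilde c_e|\le\frac{\delta_a}{\mu}+\widetilde m_e\frac{|\mu-\widetilde\mu|}{\mu\widetilde\mu}.
\]
Averaging over $e$ turns $\widetilde m_e$ into $\widetilde\mu$, so the average is at most $2\delta_a/\mu\le2\delta_a/A$. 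Multiplying by $R$ and combining with the radius term yields Equation~\eqref{eq:cth-stability}.

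One point to check is that the family means and $\overline c=1$ are taken over the same $E$ experts as the average on the left-hand side. I would state this explicitly, assuming the same set of active experts for both families.
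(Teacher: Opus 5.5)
Your proposal is correct and follows the paper's proof. It uses the same split $V_e-\widetilde V_e=sc_e(\alpha_eO_e-\widetilde\alpha_e\widetilde O_e)+s(c_e-\widetilde c_e)\widetilde\alpha_e\widetilde O_e$, the same quotient decomposition for $c_e$ and $q_{er}$ with the family average absorbed through $\overline{\widetilde m}=\widetilde\mu$ (equivalently $\overline{\widetilde c}=1$), and a radius bound with the same constant $3\delta_O$. The one difference is that you get the radius bound from the norm-matching identity \eqref{eq:normmatch}, which is exact for Equation~\eqref{eq:radius} as stated, whereas the paper's Lemma~\ref{lem:cth-radial} factors the map as $\Phi_{\tau,h}=u_hJ$, so the same bound also covers the floored radius of Appendix~\ref{sec:extended-arithmetic}.
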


Because $A$ is the smaller of two means over the experts, a few experts with scores near zero have a small effect on it. Similarly, $B$ is the smallest of the row means $\nu_e$ and $\widetilde\nu_e$, and a few rows with scores near zero have a small effect on each of these means. The row term is proportional to $\lambda$, so it vanishes at the end of training, when the row factors equal one. The bound also holds with the constants of Appendix~\ref{sec:numerics}, for zero matrices and for scores at the caps, and it allows the scores to depend on the same gradient as the direction. We prove it in Appendix~\ref{sec:cth-proofs}, where we also convert gradient errors into score errors, and Appendix~\ref{sec:properties} compares the scalar radius with row weighting at equal step length. \citet{kim2026newtonschulz} and \citet{shulgin2025inexact} analyze finite Newton-Schulz iterations and inexact Muon updates; our analysis covers the family normalization and the scalar radius.

\section{Experiments}
\label{sec:experiments}

We compare Compass with the baselines at 8.2M tokens (Section~\ref{sec:short-runs}), in longer training where weight decay matters (Section~\ref{sec:long-context}), and under a language-blocked data order (Section~\ref{sec:order-results}).

\subsection{Experimental setup}

We pretrain MoE language models from random initialization on FineWeb-Edu and FineWeb2 \citep{penedo2024fineweb,penedo2025fineweb2}. The models have six pre-norm Transformer layers, GELU experts, and top-2 routing, and expert load is balanced with auxiliary-loss-free router bias updates \citep{wang2024lossfree}. We keep parameters, gradients, optimizer states, and matrix products in FP32 for every method, so that the comparison reflects the optimizer updates.

On FineWeb-Edu, we use widths 1,024 and 2,048 with eight experts per layer (329.8M and 1,112.5M total parameters). The short runs train for 2,000 steps with batch size 16 and context length 256, for 8.2M tokens. The long runs use width 1,024, context length 1,024, and 4,000 steps (65.5M tokens, 330.6M parameters), and we extend them to 16,000 and 64,000 steps (262M and 1.05B tokens). Training sequences are sampled from a 57M-token training split for runs up to 65.5M tokens and from 380M- and 1.14B-token splits for the two longer budgets.

The matrix and AdamW learning rates follow the cosine schedule
\begin{equation}
 \eta_t=\eta_{\max}\lambda_t,\qquad
 \lambda_t=\tfrac12\{1+\cos(\pi(t-1)/(T-1))\},\qquad t=1,\ldots,T,
 \label{eq:schedule}
\end{equation}
with zero warmup. In the runs from random initialization, the parameters that the Muon-based methods update with AdamW use a peak learning rate of 0.003, $(\beta_1,\beta_2)=(0.9,0.95)$, and weight decay of 0.01. Unless stated otherwise, the matrix weight decay is $\omega=0.01$.

For each method, we tune the learning rate over three values on two tuning seeds and select the value with the lowest mean final validation loss. We then train eight evaluation seeds at the selected rate and report their mean final validation loss. All methods use the same random seeds, which set both the initialization and the order of the training batches. We therefore compare the methods seed by seed, with 95\% confidence intervals (CIs) from paired $t$-tests (Appendix~\ref{sec:new-statistics}).

\paragraph{Baselines and ablations.}
Nesterov Muon orthogonalizes $G+0.95M$ for all hidden matrices, and heavy-ball Muon orthogonalizes $M$. They share the shape factor and AdamW settings of Compass. For NorMuon, we use the reference implementation of \citet{li2025normuon}, with its own row normalization and update scaling, in FP32. We implemented Excitation from its published update rule on top of heavy-ball Muon. For Magma, we use a public AdamW-based implementation. The AdamW baseline updates all parameters with its own tuned learning rate. The ablations are expert-only Nesterov (Compass with $c=\alpha=1$); a constant expert scale, which multiplies every expert step by the mean of $c\alpha$ from a separate Compass run and tunes only its learning rate; and Compass with only the family factor or only the radius. The constant-scale control tests whether alignment-dependent variation in expert step sizes improves on a fixed multiplier matched to their average scale. Computing the Compass factors adds 4 to 7\% to the wall-clock time of a training run (Table~\ref{tab:timing} in Appendix~\ref{sec:new-details}). Learning-rate grids and the remaining settings are in Appendix~\ref{sec:new-details}, and Appendix~\ref{sec:extended-setup} applies Compass to JetMoE-8B, a released MoE model with about 8B parameters \citep{shen2024jetmoe}.

\subsection{Pretraining at 8.2M tokens}
\label{sec:short-runs}
\begin{table}[t]
\centering
\caption{Pretraining on FineWeb-Edu at 8.2M tokens (context length 256). Final validation loss (nats), mean $\pm$ standard deviation over eight seeds. The momentum column gives the choice for expert matrices and for attention and router matrices (N: Nesterov, H: heavy-ball); Adam marks the AdamW-based methods. The best mean in each column is in bold.}
\label{tab:current-width}
\small
\setlength{\tabcolsep}{4pt}
\begin{tabular}{@{}lcrr@{}}
\toprule
Method & Momentum & Width 1,024 & Width 2,048 \\
\midrule
Compass & N/H & $5.45397\pm0.00710$ & $5.32207\pm0.01184$ \\
Compass & N/N & $\mathbf{5.44908}\pm0.00768$ & $5.32340\pm0.01162$ \\
Nesterov Muon & N/N & $5.45252\pm0.00861$ & $5.32296\pm0.00973$ \\
NorMuon & N/N & $5.45211\pm0.00619$ & $5.32626\pm0.00909$ \\
Heavy-ball Muon & H/H & $5.46876\pm0.00673$ & $5.32991\pm0.01169$ \\
Excitation & H/H & $5.46430\pm0.00579$ & $5.32789\pm0.01066$ \\
Magma & Adam & $5.85811\pm0.01501$ & $5.82693\pm0.04095$ \\
AdamW & Adam & $6.03952\pm0.01077$ & $5.93637\pm0.02066$ \\
\midrule
\multicolumn{4}{@{}l}{\textit{Ablations of Compass (N/H)}}\\
Expert-only Nesterov & N/H & $5.46000\pm0.00322$ & $5.32783\pm0.01330$ \\
Constant expert scale & N/H & $5.46146\pm0.00635$ & $5.32505\pm0.01347$ \\
Family factor only & N/H & $5.45695\pm0.00797$ & $5.32358\pm0.01078$ \\
Radius only & N/H & $5.45528\pm0.00548$ & $\mathbf{5.32186}\pm0.01198$ \\
\bottomrule
\end{tabular}
\vspace{-4pt}
\end{table}

Table~\ref{tab:current-width} summarizes the short runs, and Figure~\ref{fig:paired-current} and Table~\ref{tab:current-paired} in Appendix~\ref{sec:new-statistics} show the paired differences. With Nesterov momentum on all matrices (N/N), Compass has the lowest mean loss at width 1,024, below NorMuon on six of eight seeds (paired permutation test $p=0.031$). At width 2,048, it has a lower mean loss than NorMuon and matches Nesterov Muon. With heavy-ball momentum on attention and router matrices (N/H), Compass has a lower loss than heavy-ball Muon on all 16 seeds. An earlier comparison with heavy-ball Muon at these widths, with separate learning-rate sweeps, is in Appendix~\ref{sec:width}. Excitation, Magma, and AdamW have higher losses than either Compass configuration. At this budget, Compass matches or improves on each baseline.

The ablations test whether this comes from the expert factors (Table~\ref{tab:current-attribution} in Appendix~\ref{sec:new-statistics}). With the momentum settings held fixed, Compass has a lower loss than the constant expert scale at width 1,024 on all eight seeds. Compass also has a lower mean loss than expert-only Nesterov at both widths. At width 1,024 it is below the family-factor-only ablation on all eight seeds, and at width 2,048 it matches the radius-only ablation. At width 1,024, alignment-dependent expert steps thus improve on the constant expert scale; Appendix~\ref{sec:earlier-alignment} gives more ablations up to width 2,560.

\subsection{Longer training at context length 1,024}
\label{sec:long-context}

Longer training makes the weight decay more important in the comparison with NorMuon. With its selected learning rate of 0.016 and weight decay 0.01, NorMuon shrinks each weight by a factor of $1-1.6\times10^{-4}$ per step at the peak learning rate. This per-step decay is 6.4 to 7.1 times that of Compass and Nesterov Muon at their selected rates. We therefore also train Compass and Nesterov Muon with the same per-step decay as NorMuon, setting $\omega$ so that $\eta_t\omega$ matches, and tune both over the rates $\{0.00125,0.00175,0.0025\}$. For Compass, we also evaluate twice the decay.

\begin{table}[t]
\centering
\caption{Pretraining at context length 1,024 (width 1,024, 65.5M tokens). Final validation loss (nats), mean $\pm$ standard deviation over eight seeds, and the selected learning rate. The upper block uses weight decay $\omega=0.01$. The lower block gives every method the per-step decay of NorMuon (learning rate 0.016, $\omega=0.01$), $\eta_t\omega=1.6\times10^{-4}\lambda_t$; the rows marked twice the decay use twice this per-step decay. The best mean is in bold.}
\label{tab:current-long}
\small
\setlength{\tabcolsep}{4pt}
\begin{tabular}{@{}lcrr@{}}
\toprule
Method & Momentum & Validation loss & Learning rate \\
\midrule
Compass & N/H & $4.35013\pm0.00837$ & 0.0025 \\
Expert-only Nesterov & N/H & $4.35145\pm0.00875$ & 0.0025 \\
Constant expert scale & N/H & $4.34941\pm0.00849$ & 0.0025 \\
Compass & N/N & $4.31807\pm0.00516$ & 0.0025 \\
Nesterov Muon & N/N & $4.31872\pm0.00451$ & 0.00225 \\
\midrule
\multicolumn{4}{@{}l}{\textit{NorMuon's per-step weight decay}}\\
NorMuon & N/N & $4.30685\pm0.00514$ & 0.016 \\
Nesterov Muon & N/N & $4.30257\pm0.00850$ & 0.00175 \\
Compass on the NorMuon update & N/N & $4.30332\pm0.00505$ & 0.016 \\
Compass & N/N & $4.30225\pm0.00407$ & 0.00175 \\
NorMuon, twice the decay & N/N & $4.29312\pm0.00762$ & 0.016 \\
Nesterov Muon, twice the decay & N/N & $4.28791\pm0.00760$ & 0.00175 \\
Compass, twice the decay & N/N & $\mathbf{4.28522}\pm0.00449$ & 0.00175 \\
\bottomrule
\end{tabular}
\vspace{-4pt}
\end{table}

In Table~\ref{tab:current-long}, at the default decay, Nesterov momentum on the attention and router matrices lowers the loss of Compass, which then matches Nesterov Muon, and the two again reach nearly the same loss with NorMuon's per-step decay. With twice that decay, Compass has the lowest mean loss in the table, also when NorMuon and Nesterov Muon use twice the decay with the same tuning. On more seeds at twice the decay, chosen before training, Compass stays below NorMuon (lower on 22 of 32 seeds) and slightly below Nesterov Muon (30 of 46), and Compass on the NorMuon update stays below NorMuon (20 of 32). On the full 3M-token validation split, all three differences are significant, and Compass has the lowest WikiText-2 test loss of the models we scored (Appendix~\ref{sec:new-statistics}). With matched weight decay, Compass therefore matches or improves on each baseline at this budget.

\begin{table}[t]
\centering
\caption{Compass against NorMuon as the training budget grows (width 1,024, context length 1,024). The decay column gives Compass's per-step weight decay as a multiple of NorMuon's; NorMuon keeps its own decay throughout. Losses are final validation losses (nats), averaged over the listed number of seeds; $\Delta$ is Compass minus NorMuon on the same seeds with its 95\% confidence interval, and wins counts the seeds on which Compass has the lower loss.}
\label{tab:long-scale}
\small
\setlength{\tabcolsep}{4pt}
\begin{tabular}{@{}rcrrrrr@{}}
\toprule
Tokens & Decay & Seeds & NorMuon & Compass & $\Delta$ [95\% CI] & Wins \\
\midrule
\multicolumn{7}{@{}l}{\textit{Compass (N/N)}}\\
65.5M & $1\times$ & 16 & $4.30562$ & $4.29974$ & $-0.00587\;[-0.01024,-0.00151]$ & 13/16 \\
65.5M & $2\times$ & 16 & $4.30562$ & $4.28556$ & $-0.02005\;[-0.02384,-0.01627]$ & 16/16 \\
\midrule
\multicolumn{7}{@{}l}{\textit{Compass on the NorMuon update}}\\
65.5M & $1\times$ & 16 & $4.30562$ & $4.30110$ & $-0.00452\;[-0.00727,-0.00176]$ & 14/16 \\
262M & $1\times$ & 8 & $3.70793$ & $3.70774$ & $-0.00019\;[-0.00412,0.00375]$ & 3/8 \\
1.05B & $1\times$ & 4 & $3.53628$ & $3.52745$ & $-0.00882\;[-0.03393,0.01628]$ & 2/4 \\
\bottomrule
\end{tabular}
\vspace{-4pt}
\end{table}

Table~\ref{tab:long-scale} extends the comparison with NorMuon to 16 seeds at 65.5M tokens and to budgets of 262M and 1.05B tokens, with each run's cosine schedule set to its full length. These extensions evaluate how the selected learning rates and weight decay transfer to longer training budgets. Compass on the NorMuon update uses NorMuon's learning rate and weight decay, so the two methods differ only in the expert factors, and we trained both at every budget on the same seeds. At 65.5M tokens, Compass has the lower loss, and the gap grows during training (Figure~\ref{fig:trajectory} in Appendix~\ref{sec:new-details}). At 262M and 1.05B tokens, the two methods are close, also on 16.8M held-out tokens and on WikiText-2 (Appendix~\ref{sec:long-extra}). Compass (N/N) has a lower mean loss than NorMuon at both decays, and with twice the decay it is below NorMuon at every evaluation during training; part of this gap comes from the decay itself, because NorMuon with twice its decay is also 0.01254 nats below NorMuon at its default decay on the same seeds. Under a second decay rule chosen before training, which holds the total decay $\sum_t\eta_t\omega$ constant across budgets, Compass (N/N) has the lowest mean loss of the four methods at 262M tokens over nine seeds (Appendix~\ref{sec:long-extra}). Added to NorMuon, the factors therefore lower its loss at 65.5M tokens and give a similar loss at the longer budgets.

\subsection{Language-blocked data order}
\label{sec:order-results}

FineWeb-Edu shuffles its training data in random order. To test the setting that motivates Compass, in which the data seen by the experts change during training as in curriculum or continual learning, we also train on a multilingual corpus in two data orders. The FineWeb2 model has 64 experts per layer (461.2M total parameters) and trains for 4,000 steps with four sequences of length 1,024 per step (16.4M tokens). Eight languages contribute equal numbers of tokens. In the blocked order, each seed cycles five times through eight blocks of 100 single-language steps. The shuffled order uses the same 4,000 batches in random order, and both orders share the initialization and the held-out data. We tune learning rates separately for each method and order. Compass uses the N/H configuration, and we also add its factors to the NorMuon update. Test losses use 65,536 held-out tokens per language.

\begin{table}[t]
\centering
\caption{Held-out FineWeb2 test loss (nats) after 16.4M training tokens, mean $\pm$ standard deviation over eight seeds. The same batches are shuffled or arranged in 100-step language blocks. Compass uses the N/H configuration, and the lower block applies the Compass factors to the expert matrices of the NorMuon update, with the learning-rate grid and weight decay of NorMuon. With shuffled batches all methods lie within 0.005 nats of each other, and the best mean under blocked order is in bold.}
\label{tab:order-test}
\small
\setlength{\tabcolsep}{4pt}
\begin{tabular}{@{}lrr@{}}
\toprule
Method & Shuffled & Blocked \\
\midrule
Compass (N/H) & $4.40861\pm0.00188$ & $5.46396\pm0.04741$ \\
Nesterov Muon & $4.40840\pm0.00249$ & $5.48667\pm0.04560$ \\
Heavy-ball Muon & $4.40923\pm0.00207$ & $5.47008\pm0.05157$ \\
Expert-only Nesterov & $4.40961\pm0.00213$ & $5.47661\pm0.04251$ \\
Constant expert scale & $4.40866\pm0.00380$ & $5.46575\pm0.04341$ \\
\midrule
NorMuon & $4.40463\pm0.00202$ & $5.46953\pm0.04158$ \\
Compass on the NorMuon update & $4.40569\pm0.00362$ & $\mathbf{5.44608}\pm0.04932$ \\
\bottomrule
\end{tabular}
\vspace{-4pt}
\end{table}

Under blocked order, Compass has a lower mean test loss than Nesterov Muon, heavy-ball Muon, NorMuon, expert-only Nesterov, and the constant expert scale (Table~\ref{tab:order-test}). Against Nesterov Muon, it is lower on seven of eight seeds (paired permutation test $p=0.016$) and in all eight languages (Figure~\ref{fig:order-effect}b). We repeated this comparison on eight new seeds with every setting fixed before training and found a similar gap (Table~\ref{tab:order-replication}); over the 16 seeds, Compass is lower on 14 ($p=0.0012$). With shuffled batches, the two methods reach nearly the same loss, and the gap under blocked order is larger than under shuffled order ($p=0.016$). The effect of Compass therefore depends here on the data order. The constant expert scale also comes close to Compass under blocked order. In the short runs at width 1,024, Compass improves on it on all eight seeds.

Added to the NorMuon update, the Compass factors give the lowest test loss under blocked order (Table~\ref{tab:order-test}). This variant is below NorMuon on six of eight seeds and in all eight languages. A replication on eight new seeds gives a similar gap, and over all 16 seeds it is lower on 13 ($p=0.0005$). With shuffled batches, it matches NorMuon (Table~\ref{tab:order-normuon}). The two methods share the update, weight decay, learning-rate grid, and tuning seeds, and each uses its own selected learning rate.

\section{Analysis of training}
\label{sec:analysis}

We now examine expert scales, load balance and routing, response to language switches, and training stability with Compass. Figure~\ref{fig:analysis} uses statistics recorded during the 65.5M-token runs with twice NorMuon's per-step decay.

\begin{figure}[t]
\centering
\includegraphics[width=\linewidth]{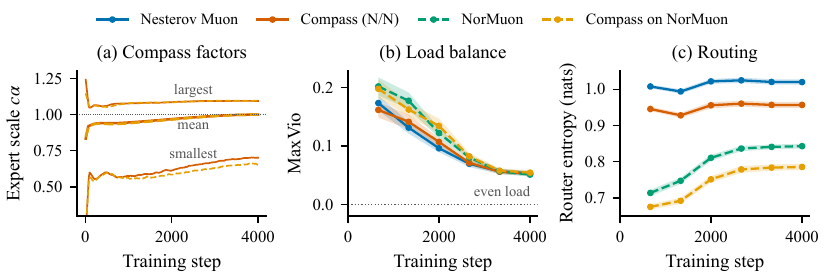}
\caption{\textbf{Training analysis at 65.5M tokens.} All methods use twice NorMuon's per-step weight decay (33 seeds for Compass (N/N), 32 for Nesterov Muon, and 30 each for NorMuon and Compass on the NorMuon update). (a) Smallest, mean, and largest expert scale $c\alpha$ over the 96 expert matrices at each step, averaged over seeds and smoothed over 100 steps. (b) MaxVio; zero means an even load. (c) Mean entropy of the router softmax over all eight experts, computed without the router bias. Panels (b) and (c) use 32,768 fixed training tokens and average over the MoE layers; bands show 95\% confidence intervals.}
\label{fig:analysis}
\end{figure}

\paragraph{Compass factors.} The expert scale $c\alpha$ differs across the 96 expert matrices throughout training (Figure~\ref{fig:analysis}a). After the first 100 steps, the smallest factor stays between about 0.55 and 0.7 and the largest between 1.05 and 1.10. The mean rises from about 0.93 to one as the radius approaches one, and late in training some experts still take steps about 30\% shorter than the average.

\paragraph{Load balance and routing.} Moving step size between experts could change the routing. All four optimizers keep the expert load balanced (Figure~\ref{fig:analysis}b): for every method, MaxVio \citep{wang2024lossfree}, the relative excess load of the most used expert averaged over the MoE layers, falls from about 0.2 at the first evaluation to about 0.05 at the end of training. Every expert receives tokens in all 165 runs of these methods at 65.5M tokens, at NorMuon's per-step decay and at twice that decay. The router distribution over experts is more concentrated with Compass (Figure~\ref{fig:analysis}c). Its mean entropy, which Excitation also reports \citep{shaier2026excitation}, is lower with Compass than with the same update alone across seeds. It is 0.957 against 1.020 nats with the Muon update (32 seeds, $t=-16.0$, $p<10^{-15}$) and 0.786 against 0.843 nats with the NorMuon update (30 seeds, $t=-12.4$, $p<10^{-12}$). The gap persists at the longer budgets of Table~\ref{tab:long-scale}, where the losses of the two methods are close. Compass on the NorMuon update has the lower router entropy on all eight seeds at 262M tokens ($t=-7.2$, $p=0.0002$) and on three of the four seeds at 1.05B tokens. Both runs update the router the same way, so the difference arises from the expert updates; at the same load balance, tokens are assigned more clearly to particular experts.

\paragraph{Language switches.} The factors also respond when the data seen by the experts change. In the multilingual runs of Section~\ref{sec:order-results}, we recorded the spread of $c\alpha$ across the expert matrices of Compass at NorMuon update 25, 50, 75, and 100 steps into each block. Twenty-five steps after a language switch, the standard deviation of $c\alpha$ is 0.145 against 0.136 later in the block, and its minimum is 0.26 against 0.30, on all eight seeds ($t=12.3$ and $t=-4.5$). With shuffled batches, the spread at the same steps stays within 0.003 of its later value. Compass therefore shortens the steps of some experts most strongly early in each block, and it lowers the test loss most under blocked order.

\paragraph{Stability.} Moving step size between experts could also make training less stable. None of these 165 runs diverged or produced a non-finite loss. Following \citet{olmo2025olmo2}, we count a loss spike when the training loss at a step is at least seven standard deviations away from the mean of the previous 1,000 steps. No run has such a spike, even with a 100-step window.

\section{Conclusion}

\Compass{} sets each expert matrix step based on the agreement between its orthogonalized update and its current gradient, while keeping the direction computed by Muon. In pretraining on FineWeb-Edu, Compass with Nesterov momentum on all matrices performs as well as or better than Muon, NorMuon, and the other optimizers we tested, with weight decay matched to NorMuon in the longer runs. Added to NorMuon, Compass lowers its loss at 65.5M tokens while yielding similar loss at 262M and 1.05B tokens. In these runs, the expert load stays balanced, and the router assigns tokens to experts more decisively than with the same update alone. Compass is most useful when the data seen by the experts change, as with language blocks, reducing steps more strongly early in the block. These results suggest that expert step sizes matter more when the data of each expert change during training, as in continual pretraining or with data mixtures that vary over time.
\label{sec:mainend}

\FloatBarrier\bigskip

\section*{Ethics statement}
This work uses publicly released language models and datasets, and their usage conditions apply to the trained checkpoints and derived data. We keep training, model selection, and held-out evaluation data separate. Models trained on these corpora can reproduce biases and harmful content in the data, so any deployment needs a separate evaluation.

\section*{AI use statement}
Generative AI tools assisted with implementation, analysis, and writing; the authors verified all code, proofs, and results and take full responsibility for the content.

\bibliography{references,theory_references}
\bibliographystyle{iclr2027_conference}

\clearpage
\appendix
\section{Experimental details for the pretraining runs}
\label{sec:new-details}

This appendix gives the architectures and training settings, additional runs at 262M and 1.05B tokens, the data splits, and the learning-rate selection and baselines.

\subsection{Architectures and training}
All pretraining models are decoder-only Transformers with six pre-norm blocks, learned positional embeddings, full causal attention, and a final LayerNorm. Table~\ref{tab:arch} gives their sizes. Each expert has two linear projections with a GELU between them; the embedding and output layers are untied, and we use the GPT-2 tokenizer (vocabulary size 50,257).

\begin{table}[ht]
\centering
\caption{Model sizes. In the FineWeb-Edu models, the expert hidden width is twice the model width. The multilingual model has about 71.08M active parameters per token.}
\label{tab:arch}
\small
\setlength{\tabcolsep}{4pt}
\begin{tabular}{@{}lrrr@{}}
\toprule
& \multicolumn{2}{c}{FineWeb-Edu} & \\
\cmidrule(lr){2-3}
Setting & 8.2M tokens & 65.5M to 1.05B tokens & Multilingual \\
\midrule
Model width & 1,024 / 2,048 & 1,024 & 512 \\
Attention heads & 16 & 16 & 8 \\
Experts per layer & 8 & 8 & 64 \\
Experts per token & 2 & 2 & 2 \\
Expert hidden width & 2,048 / 4,096 & 2,048 & 1,024 \\
Context length & 256 & 1,024 & 1,024 \\
Total parameters & 329,781,248 / 1,112,547,328 & 330,567,680 & 461,154,304 \\
\bottomrule
\end{tabular}
\end{table}

The router takes a softmax over all experts, keeps the two with the highest score plus bias, and renormalizes their probabilities. After each step, we update the bias of expert $e$ as $\xi_e\gets\xi_e+0.02\operatorname{sign}(1/E-\ell_e)$, where $\ell_e$ is its share of the layer's routed tokens; this rule is the same for all optimizers, and evaluation uses the final biases. We clip gradients to a global norm of one and run all computation in FP32, without automatic mixed precision or TF32.

\begin{algorithm}[H]
\caption{One training step of Compass}
\label{alg:compass}
\begin{algorithmic}[1]
\Require Clipped gradients, weights, momentum buffers and learning rates at step $t$
\State Apply the AdamW update to embeddings, output head, biases, and normalization parameters
\For{each attention or router matrix with a gradient}
\State $M\gets0.95M+G$, $O\gets\NS(M)$
\State Apply weight decay and add $-\eta_t sO$ to the weights
\EndFor
\For{each expert family}
\State Include every matrix that has a gradient, including zero gradients
\For{each included matrix $e$}
\State $M_e\gets0.95M_e+G_e$, $O_e\gets\NS(G_e+0.95M_e)$
\State Compute the alignment score $a_e$ from $O_e$ and $G_e$
\EndFor
\State Compute $c_e$ with Equation~\eqref{eq:expertgate}
\For{each included matrix $e$}
\State Compute the row factors and $\alpha_e$ with Equations~\eqref{eq:rowgate} and~\eqref{eq:radius}
\State Apply weight decay and add $-(\eta_t c_e)s(\alpha_e O_e)$ to $W_e$
\EndFor
\EndFor
\end{algorithmic}
\end{algorithm}

For Compass (N/N), line 3 of Algorithm~\ref{alg:compass} uses $\NS(G+0.95M)$ in place of $\NS(M)$, and the rest of the algorithm is unchanged. Its data, architecture, schedule, and seeds match the other runs at each budget, and Table~\ref{tab:current-rates} gives its learning-rate sweeps.

For the runs with NorMuon's per-step decay, we set $\omega=1.6\times10^{-4}/\eta_{\max}$, so that $\eta_t\omega=1.6\times10^{-4}\lambda_t$ at every step. Table~\ref{tab:current-rates} gives the learning-rate grids and the selected rates of these runs; among the Compass settings, twice this decay at 0.00175 has the lowest mean tuning loss. Compass on the NorMuon update applies the Compass factors to the expert matrices of the NorMuon update from the reference implementation, with NorMuon's own weight decay, and measures its cosines on the NorMuon update itself. As in NorMuon, an expert without a gradient in a step receives a zero gradient and still takes an update from its momentum, without weight decay or Compass factors.

Each FineWeb-Edu run uses one NVIDIA RTX PRO 6000 Blackwell GPU (PyTorch 2.11, CUDA 12.8), except the 262M- and 1.05B-token pairs of Table~\ref{tab:long-scale}, which use one GH200 GPU per run (PyTorch 2.12, CUDA 13.2) with the same code, settings, and tokenized data. The two methods of a seed always run on the same GPU type. Eleven NorMuon runs at 262M and 1.05B tokens, repeated on different nodes, reproduced their final validation losses bit for bit. The multilingual runs use GH200 GPUs and run several independent runs on one GPU, which yields the same parameters, optimizer states, and losses as running alone. Table~\ref{tab:timing} gives the wall-clock cost of the Compass factors.

\begin{table}[ht]
\centering
\caption{Wall-clock cost of the Compass factors. Each pair uses the same seed on the same machine, and the time covers the whole run, including validation. The last column gives the extra time of Compass relative to the method without the factors.}
\label{tab:timing}
\small
\setlength{\tabcolsep}{4pt}
\begin{tabular}{@{}lllrr@{}}
\toprule
Tokens & GPU & Comparison & Pairs & Extra time \\
\midrule
65.5M & RTX PRO 6000 & Compass (N/N) and Nesterov Muon, twice the decay & 4 & 4.3\% \\
65.5M & RTX PRO 6000 & Compass on the NorMuon update and NorMuon & 5 & 3.8\% \\
262M & GH200 & Compass on the NorMuon update and NorMuon & 8 & 5.2\% \\
1.05B & GH200 & Compass on the NorMuon update and NorMuon & 4 & 6.6\% \\
\bottomrule
\end{tabular}
\end{table}

\subsection{Additional runs at 262M and 1.05B tokens}
\label{sec:long-extra}

\paragraph{More seeds.} We also trained long-budget pairs on the RTX PRO 6000 GPUs: Compass on the NorMuon update and NorMuon for seeds 250 to 252 at 262M tokens and seed 250 at 1.05B tokens, and Compass (N/N) and NorMuon at 262M tokens with three seeds at NorMuon's per-step decay and two at twice that decay, where the two methods differ by 0.0037 and 0.0061 nats. Pooled with the pairs of Table~\ref{tab:long-scale}, Compass on the NorMuon update and NorMuon are within 0.0003 nats at 262M tokens (11 seeds, 95\% CI $[-0.0026,0.0033]$, lower on 5), and Compass on the NorMuon update is 0.0058 nats below NorMuon at 1.05B tokens (5 seeds, 95\% CI $[-0.0247,0.0130]$, lower on 2). The additional seeds therefore leave the conclusion of Table~\ref{tab:long-scale} unchanged: at these budgets, the two methods are close.

\paragraph{Larger evaluation sets.} To reduce sampling noise, we re-evaluated the final models of the long-budget pairs on the first 16.8M tokens of the validation split, 128 times the usual 131,072 tokens, in non-overlapping windows of 1,024 tokens. On the seeds of Table~\ref{tab:long-scale}, Compass on the NorMuon update is 0.0009 nats below NorMuon at 262M tokens (95\% CI $[-0.0039,0.0022]$) and 0.0073 nats below it at 1.05B tokens (95\% CI $[-0.0309,0.0163]$). On WikiText-2, the two methods differ by 0.004 and 0.011 nats at the two budgets. These differences are small compared with the spread of 0.17 to 0.18 nats across NorMuon's seeds, and adding the RTX PRO 6000 pairs leads to the same conclusions. The larger evaluation sets therefore agree with Table~\ref{tab:long-scale}.

\paragraph{A decay rule chosen before training.} We also trained the Muon- and NorMuon-based pairs at both budgets under a decay rule that we chose before training. The rule keeps the total decay $\sum_t\eta_t\omega$ fixed across budgets; per step, this is twice NorMuon's decay at 65.5M tokens, half at 262M tokens, and one eighth at 1.05B tokens. The rule raises the final loss of NorMuon and of Compass on the NorMuon update by 0.0072 and 0.0075 nats at 262M tokens (on all eight GH200 seeds) and by 0.039 and 0.050 nats at 1.05B tokens (four and three seeds), so Table~\ref{tab:long-scale} uses NorMuon's decay. Under the rule, Compass on the NorMuon update is 0.0004 nats below NorMuon at 262M tokens (nine seeds, 95\% CI $[-0.0024,0.0016]$), and the two differ by less than 0.0001 nats at 1.05B tokens (five seeds, 95\% CI $[-0.0066,0.0067]$).

At 262M tokens under the rule, the mean losses over nine seeds are 3.7088 for Compass (N/N), 3.7102 for Nesterov Muon, 3.7150 for Compass on the NorMuon update, and 3.7154 for NorMuon. Compass (N/N) is below NorMuon on eight of the nine seeds (95\% CI $[-0.0119,-0.0014]$, paired permutation test $p=0.012$) and below Nesterov Muon on seven (95\% CI $[-0.0059,0.0031]$). At 1.05B tokens, Compass (N/N) and Nesterov Muon differ by 0.0050 and 0.0051 nats on the two seeds trained (252 and 253). In these comparisons, the pair with seed 250 ran on the RTX PRO 6000 GPUs, and we exclude one Compass run on the NorMuon update (262M tokens, seed 251) that lacks a matching NorMuon run. Under this rule, Compass (N/N) has the lowest mean loss at 262M tokens, and Compass on the NorMuon update stays close to NorMuon at both budgets.

\subsection{Data}
We use the \texttt{sample-10BT} subset of FineWeb-Edu. Each document goes to the training split (95\%) or the validation split (5\%), and we separate documents by the end-of-text token. The split for runs up to 65.5M tokens has 57M training and 3M validation tokens, and the larger splits of Section~\ref{sec:experiments} are built in the same way from the same document stream. Training batches are random windows drawn with a seed-dependent generator, and the target at each position is the next token. Validation uses eight fixed batches drawn with seed 12345, with 32,768 tokens for the short runs and 131,072 for the long runs, all of the same shape, so their average equals the token-level mean loss.

The multilingual experiment uses French, German, Spanish, Italian, Portuguese, Polish, Dutch, and Indonesian from FineWeb2. We grouped documents by normalized URL, removed exact duplicates, and split the groups into training, validation, and test sets in proportions of 90\%, 5\%, and 5\%. In the blocked order, a seed-specific permutation of the languages sets their order in the five cycles of eight 100-step blocks, and the shuffled order applies a second permutation to the same batches. We validate on 8,192 tokens per language every 100 steps. We fixed splits and batch orders before any training run.

\subsection{Learning-rate selection and baselines}
The selection rule in Section~\ref{sec:experiments} applies to both widths, to 65.5M tokens, and to each data order in the multilingual experiment; ties go to the lower rate. The tuning seeds are 3500 and 3501 at width 1,024 and 3600 and 3601 at width 2,048, and the evaluation seeds are 250 to 257 and 260 to 267 at the two widths. At 65.5M tokens, we added eight evaluation seeds (258 to 265), and the 262M- and 1.05B-token pairs of Table~\ref{tab:long-scale} use seeds 267 to 274 and 251 to 254. The constant expert scale is the mean of $c\alpha$ over steps and expert matrices in a separate Compass run with the same settings. We computed it separately for each width, for the 65.5M-token setting, and for each data order; for multilingual training it is 0.940700 (shuffled) and 0.948149 (blocked).

Nesterov Muon and heavy-ball Muon share the shape factor $0.2\sqrt{\max(m,n)}$, the parameter groups, the AdamW learning rate, gradient clipping, schedule, and weight decay of Compass. For NorMuon, we ran the single-GPU reference implementation of \citet{li2025normuon} with its Newton-Schulz iteration in FP32; its own update scaling puts its learning rates in Table~\ref{tab:current-rates} on a different scale. Excitation follows the published utilization-based rule on heavy-ball Muon with exponent one, and its multiplier scales the full expert step, weight decay included. For Magma, we used a public AdamW-based implementation on the hidden matrices (masking probability 0.5, temperature 2, alignment averaging coefficient 0.9). AdamW uses the settings of Appendix~\ref{sec:numerics}, both as a baseline and for the embedding, output-head, bias, and normalization parameters.

\small
\begin{longtable}{@{}>{\raggedright\arraybackslash}p{.17\linewidth}>{\raggedright\arraybackslash}p{.25\linewidth}>{\raggedright\arraybackslash}p{.38\linewidth}r@{}}
\caption{Learning-rate sweeps. Each entry gives a learning rate and the mean final validation loss (nats) over the two tuning seeds; the last column gives the selected rate.}\label{tab:current-rates}\\
\toprule Setting & Method & Learning rate: mean tuning loss & Selected \\ \midrule
\endfirsthead
\toprule Setting & Method & Learning rate: mean tuning loss & Selected \\ \midrule
\endhead
8.2M, 1,024 & Nesterov Muon & 0.0005625: 5.58323, 0.001125: 5.46007, 0.00225: 5.61247 & 0.001125 \\
8.2M, 2,048 & Nesterov Muon & 0.00034375: 5.43610, 0.0006875: 5.32134, 0.001375: 5.56518 & 0.0006875 \\
8.2M, 1,024 & Compass (N/H) & 0.000625: 5.55381, 0.00125: 5.45141, 0.0025: 5.85329 & 0.00125 \\
8.2M, 2,048 & Compass (N/H) & 0.000375: 5.41242, 0.00075: 5.31861, 0.0015: 5.66832 & 0.00075 \\
8.2M, 1,024 & NorMuon & 0.004: 5.58115, 0.008: 5.46068, 0.016: 5.64487 & 0.008 \\
8.2M, 2,048 & NorMuon & 0.003125: 5.46190, 0.00625: 5.32107, 0.0125: 5.46869 & 0.00625 \\
8.2M, 1,024 & Excitation & 0.0005625: 5.57691, 0.001125: 5.45991, 0.00225: 5.70183 & 0.001125 \\
8.2M, 2,048 & Excitation & 0.00034375: 5.43100, 0.0006875: 5.31719, 0.001375: 5.62580 & 0.0006875 \\
8.2M, 1,024 & Magma & 0.0006: 6.07928, 0.0012: 5.87716, 0.0024: 6.12660 & 0.0012 \\
8.2M, 2,048 & Magma & 0.0004: 5.90850, 0.0008: 5.87356, 0.0016: 6.31308 & 0.0008 \\
8.2M, 1,024 & AdamW & 0.0003: 6.02889, 0.0006: 6.10369, 0.0012: 6.59153 & 0.0003 \\
8.2M, 2,048 & AdamW & 0.0002: 5.98742, 0.0004: 6.43759, 0.0008: 6.85597 & 0.0002 \\
8.2M, 1,024 & Heavy-ball Muon & 0.0005625: 5.57384, 0.001125: 5.46451, 0.00225: 5.68255 & 0.001125 \\
8.2M, 2,048 & Heavy-ball Muon & 0.00034375: 5.43027, 0.0006875: 5.32990, 0.001375: 5.53181 & 0.0006875 \\
65.5M, 1,024 & Compass (N/H) & 0.000625: 4.48751, 0.00125: 4.34896, 0.0025: 4.34744 & 0.0025 \\
65.5M, 1,024 & Nesterov Muon & 0.0005625: 4.52486, 0.001125: 4.35155, 0.00225: 4.31299 & 0.00225 \\
8.2M, 1,024 & Expert-only Nesterov & 0.000625: 5.55031, 0.00125: 5.46093, 0.0025: 5.76266 & 0.00125 \\
8.2M, 2,048 & Expert-only Nesterov & 0.000375: 5.40946, 0.00075: 5.31764, 0.0015: 5.77167 & 0.00075 \\
8.2M, 1,024 & Constant expert scale & 0.000625: 5.54809, 0.00125: 5.46183, 0.0025: 5.77364 & 0.00125 \\
8.2M, 2,048 & Constant expert scale & 0.000375: 5.41191, 0.00075: 5.32046, 0.0015: 5.60901 & 0.00075 \\
8.2M, 1,024 & Family factor only & 0.000625: 5.55013, 0.00125: 5.46234, 0.0025: 5.78520 & 0.00125 \\
8.2M, 2,048 & Family factor only & 0.000375: 5.40639, 0.00075: 5.31673, 0.0015: 5.67988 & 0.00075 \\
8.2M, 1,024 & Radius only & 0.000625: 5.55394, 0.00125: 5.45810, 0.0025: 5.75885 & 0.00125 \\
8.2M, 2,048 & Radius only & 0.000375: 5.41524, 0.00075: 5.31524, 0.0015: 5.69521 & 0.00075 \\
65.5M, 1,024 & Expert-only Nesterov & 0.000625: 4.49086, 0.00125: 4.34898, 0.0025: 4.34528 & 0.0025 \\
65.5M, 1,024 & Constant expert scale & 0.000625: 4.49121, 0.00125: 4.34978, 0.0025: 4.34448 & 0.0025 \\
65.5M, 1,024 & NorMuon & 0.004: 4.52649, 0.008: 4.35955, 0.016: 4.30735 & 0.016 \\
65.5M, 1,024 & Compass (N/N) & 0.000625: 4.49313, 0.00125: 4.33468, 0.0025: 4.32614 & 0.0025 \\
8.2M, 1,024 & Compass (N/N) & 0.000625: 5.55654, 0.00125: 5.45029, 0.0025: 5.72406 & 0.00125 \\
8.2M, 2,048 & Compass (N/N) & 0.000375: 5.41704, 0.00075: 5.31271, 0.0015: 5.60675 & 0.00075 \\
65.5M, NorMuon decay & Compass (N/N) & 0.00125: 4.31849, 0.00175: 4.29882, 0.0025: 4.30147 & 0.00175 \\
65.5M, NorMuon decay & Compass (N/N), $2\times$ decay & 0.00125: 4.30521, 0.00175: 4.28936 & 0.00175 \\
65.5M, NorMuon decay & Compass (N/N), $4\times$ decay & 0.00125: 4.29843 & \\
65.5M, NorMuon decay & Nesterov Muon & 0.00125: 4.31992, 0.00175: 4.30170, 0.0025: 4.30437 & 0.00175 \\
65.5M, NorMuon decay & Nesterov Muon, $2\times$ decay & 0.00125: 4.31241, 0.00175: 4.28181 & 0.00175 \\
65.5M, NorMuon decay & NorMuon, $2\times$ decay & 0.008: 4.33258, 0.016: 4.28653 & 0.016 \\
65.5M, NorMuon decay & Compass on the NorMuon update & 0.016: 4.30626, 0.022: 4.33721 & 0.016 \\
\bottomrule
\end{longtable}
\normalsize

For multilingual training, Compass, Nesterov Muon, heavy-ball Muon, expert-only Nesterov, and the constant expert scale are tuned for each order over $\{0.000625,0.00125,0.0025\}$ on seeds 3500 and 3501, and the evaluation runs use seeds 250 to 257. All but one of the ten selections are 0.00125. The exception is Nesterov Muon under blocked order, which picks 0.000625 by a margin of $7\times10^{-6}$ nats in mean tuning loss (5.46943). NorMuon and Compass on the NorMuon update were tuned on the same seeds over $\{0.00283,0.00566,0.0113\}$, a grid on NorMuon's own scale. NorMuon chose 0.00283 (blocked) and 0.00566 (shuffled), and Compass on the NorMuon update chose 0.00566 for both orders. The NorMuon tuning runs used a second GH200 system (PyTorch 2.7, CUDA 12.8); all multilingual evaluation runs used PyTorch 2.12 and CUDA 13.2.

Figure~\ref{fig:trajectory} shows the validation loss during training at 65.5M tokens.

\begin{figure}[ht]
\centering
\includegraphics[width=\linewidth]{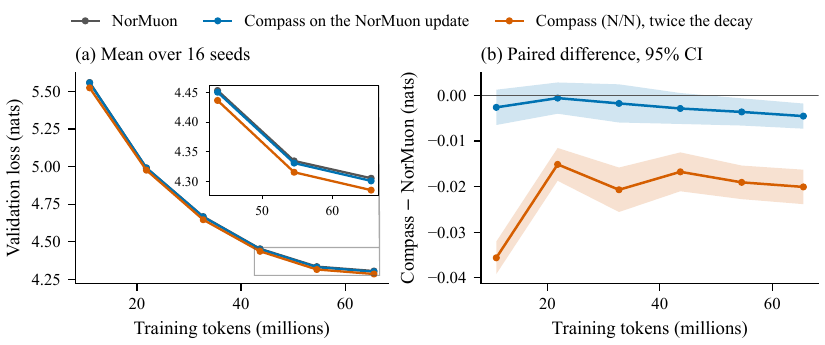}
\caption{Validation loss during training at 65.5M tokens (width 1,024, context length 1,024; seeds 250 to 265, evaluated every 666 updates). (a) Mean over the 16 seeds; the inset enlarges the last three evaluations. (b) Compass minus NorMuon on the same seed, with pointwise 95\% paired-$t$ confidence bands. NorMuon and Compass on the NorMuon update use NorMuon's weight decay; Compass (N/N) uses twice NorMuon's per-step decay, as in Table~\ref{tab:long-scale}.}
\label{fig:trajectory}
\end{figure}

\section{Paired comparisons and statistical tests}
\label{sec:new-statistics}
For each seed, the paired difference is the loss of Compass minus the loss of the baseline. We report means and sample standard deviations over seeds, and 95\% confidence intervals and two-sided $p$-values from paired $t$-tests. For the main comparisons, we also give exact two-sided $p$-values from paired permutation (sign-flip) tests, which enumerate all $2^n$ sign assignments. Seeds are the independent replicates; results for different languages come from the same trained models and are therefore correlated.

Table~\ref{tab:current-attribution} gives the paired $t$ statistics and $p$-values of the ablations at 8.2M tokens, Table~\ref{tab:current-paired} lists every paired comparison, and Table~\ref{tab:wikitext-long} gives the WikiText-2 test loss of the 65.5M-token models. Table~\ref{tab:extra-paired} adds the comparisons on all evaluation seeds at twice the decay, on the usual validation tokens and on the full validation split, and the paired WikiText-2 differences.

\begin{figure}[ht]
\centering
\includegraphics[width=\linewidth]{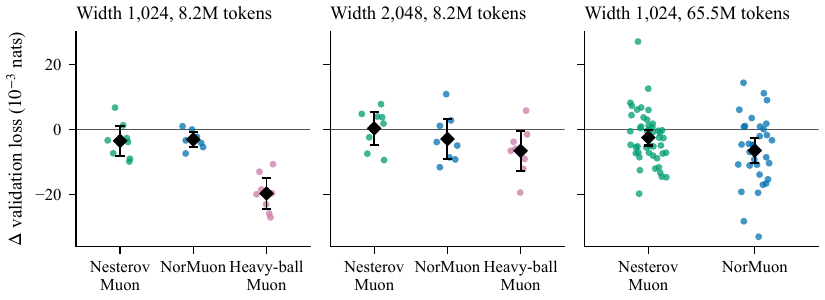}
\caption{\textbf{Paired differences in final validation loss between Compass and each baseline.} Each point is one seed; diamonds and bars show the mean and 95\% confidence interval. Negative values favor Compass. At 8.2M tokens, Compass uses Nesterov momentum on all matrices (N/N), with eight seeds per comparison. At 65.5M tokens, Compass (N/N), Nesterov Muon, and NorMuon all use twice NorMuon's per-step weight decay, with 46 seeds for Nesterov Muon and 32 for NorMuon (Section~\ref{sec:long-context}).}
\label{fig:paired-current}
\end{figure}

\begin{table}[t]
\centering
\caption{Ablations at 8.2M tokens. Compass (N/H) minus each ablation in final validation loss (nats); negative values favor Compass. Intervals are 95\% confidence intervals from paired $t$-tests over eight seeds, and $t$ and $p$ are the paired $t$ statistic and its two-sided $p$-value.}
\label{tab:current-attribution}
\small
\setlength{\tabcolsep}{4pt}
\begin{tabular}{@{}rlrrr@{}}
\toprule
Width & Ablation & Difference [95\% CI] & $t$ & $p$ \\
\midrule
1,024 & Expert-only Nesterov & $-0.00603\;[-0.01102,-0.00104]$ & $-2.86$ & 0.024 \\
2,048 & Expert-only Nesterov & $-0.00576\;[-0.01212,+0.00059]$ & $-2.14$ & 0.069 \\
1,024 & Constant expert scale & $-0.00749\;[-0.00934,-0.00564]$ & $-9.56$ & 0.00003 \\
2,048 & Constant expert scale & $-0.00298\;[-0.00872,+0.00276]$ & $-1.23$ & 0.26 \\
\bottomrule
\end{tabular}
\end{table}

\FloatBarrier\bigskip
\footnotesize
\setlength{\tabcolsep}{3pt}
\begin{longtable}{@{}lllrrrr@{}}
\caption{Paired comparisons of final validation loss: Compass minus the comparison method (nats), 95\% confidence interval, seeds favoring Compass, and two-sided paired $t$-test $p$-value. Settings give the token budget and the model width; $1\times$ and $2\times$ denote NorMuon's per-step weight decay and twice that value.}\label{tab:current-paired}\\
\toprule Setting & Compass & Comparison & Difference & 95\% CI & Wins & $p$ \\ \midrule
\endfirsthead
\toprule Setting & Compass & Comparison & Difference & 95\% CI & Wins & $p$ \\ \midrule
\endhead
8.2M, 1,024 & N/N & NorMuon & $-0.00303$ & $[-0.00532,-0.00074]$ & 6/8 & 0.016 \\
8.2M, 2,048 & N/N & NorMuon & $-0.00286$ & $[-0.00913,+0.00340]$ & 5/8 & 0.32 \\
8.2M, 1,024 & N/N & Nesterov Muon & $-0.00344$ & $[-0.00807,+0.00119]$ & 6/8 & 0.12 \\
8.2M, 2,048 & N/N & Nesterov Muon & $+0.00044$ & $[-0.00473,+0.00560]$ & 3/8 & 0.85 \\
8.2M, 1,024 & N/H & Nesterov Muon & $+0.00144$ & $[-0.00415,+0.00704]$ & 5/8 & 0.56 \\
8.2M, 2,048 & N/H & Nesterov Muon & $-0.00089$ & $[-0.00562,+0.00384]$ & 4/8 & 0.67 \\
8.2M, 1,024 & N/H & NorMuon & $+0.00186$ & $[-0.00127,+0.00498]$ & 2/8 & 0.20 \\
8.2M, 2,048 & N/H & NorMuon & $-0.00419$ & $[-0.00860,+0.00022]$ & 6/8 & 0.059 \\
8.2M, 1,024 & N/H & Excitation & $-0.01033$ & $[-0.01425,-0.00641]$ & 8/8 & $4.3\times10^{-4}$ \\
8.2M, 2,048 & N/H & Excitation & $-0.00582$ & $[-0.01004,-0.00159]$ & 8/8 & 0.014 \\
8.2M, 1,024 & N/H & Magma & $-0.40414$ & $[-0.41412,-0.39417]$ & 8/8 & $3.6\times10^{-12}$ \\
8.2M, 2,048 & N/H & Magma & $-0.50486$ & $[-0.53970,-0.47003]$ & 8/8 & $4.7\times10^{-9}$ \\
8.2M, 1,024 & N/H & AdamW & $-0.58555$ & $[-0.59283,-0.57828]$ & 8/8 & $2.9\times10^{-14}$ \\
8.2M, 2,048 & N/H & AdamW & $-0.61430$ & $[-0.63249,-0.59610]$ & 8/8 & $1.3\times10^{-11}$ \\
8.2M, 1,024 & N/H & Heavy-ball Muon & $-0.01479$ & $[-0.01999,-0.00959]$ & 8/8 & $2.7\times10^{-4}$ \\
8.2M, 2,048 & N/H & Heavy-ball Muon & $-0.00784$ & $[-0.01245,-0.00323]$ & 8/8 & 0.0051 \\
\midrule
8.2M, 1,024 & N/H & Expert-only Nesterov & $-0.00603$ & $[-0.01102,-0.00104]$ & 8/8 & 0.024 \\
8.2M, 2,048 & N/H & Expert-only Nesterov & $-0.00576$ & $[-0.01212,+0.00059]$ & 7/8 & 0.069 \\
8.2M, 1,024 & N/H & Constant expert scale & $-0.00749$ & $[-0.00934,-0.00564]$ & 8/8 & $2.9\times10^{-5}$ \\
8.2M, 2,048 & N/H & Constant expert scale & $-0.00298$ & $[-0.00872,+0.00276]$ & 5/8 & 0.26 \\
8.2M, 1,024 & N/H & Family factor only & $-0.00298$ & $[-0.00429,-0.00167]$ & 8/8 & 0.0010 \\
8.2M, 2,048 & N/H & Family factor only & $-0.00151$ & $[-0.00566,+0.00264]$ & 5/8 & 0.42 \\
8.2M, 1,024 & N/H & Radius only & $-0.00131$ & $[-0.00586,+0.00324]$ & 4/8 & 0.52 \\
8.2M, 2,048 & N/H & Radius only & $+0.00021$ & $[-0.00405,+0.00448]$ & 3/8 & 0.91 \\
\midrule
65.5M, 1,024 & N/H & Nesterov Muon & $+0.03141$ & $[+0.02351,+0.03931]$ & 0/8 & $3.2\times10^{-5}$ \\
65.5M, 1,024 & N/H & NorMuon & $+0.04328$ & $[+0.03669,+0.04988]$ & 0/8 & $1.1\times10^{-6}$ \\
65.5M, 1,024 & N/H & Expert-only Nesterov & $-0.00132$ & $[-0.01038,+0.00775]$ & 4/8 & 0.74 \\
65.5M, 1,024 & N/H & Constant expert scale & $+0.00072$ & $[-0.00796,+0.00940]$ & 4/8 & 0.85 \\
65.5M, 1,024 & N/N, $1\times$ & NorMuon & $-0.00587$ & $[-0.01024,-0.00151]$ & 13/16 & 0.012 \\
65.5M, 1,024 & N/N, $1\times$ & Nesterov Muon & $-0.01647$ & $[-0.02271,-0.01023]$ & 8/8 & $4.3\times10^{-4}$ \\
65.5M, 1,024 & N/N, $1\times$ & Nesterov Muon, $1\times$ & $-0.00032$ & $[-0.00809,+0.00745]$ & 2/8 & 0.93 \\
65.5M, 1,024 & N/N, $2\times$ & NorMuon & $-0.02163$ & $[-0.02707,-0.01620]$ & 8/8 & $3.2\times10^{-5}$ \\
65.5M, 1,024 & N/N, $2\times$ & Nesterov Muon, $1\times$ & $-0.01735$ & $[-0.02382,-0.01088]$ & 8/8 & $3.9\times10^{-4}$ \\
65.5M, 1,024 & N/N, $2\times$ & Compass (N/N), $1\times$ & $-0.01703$ & $[-0.02032,-0.01374]$ & 8/8 & $5.6\times10^{-6}$ \\
65.5M, 1,024 & N/N, $2\times$ & Nesterov Muon, $2\times$ & $-0.00269$ & $[-0.00839,+0.00301]$ & 6/8 & 0.30 \\
65.5M, 1,024 & N/N, $2\times$ & NorMuon, $2\times$ & $-0.00790$ & $[-0.01747,+0.00167]$ & 5/8 & 0.092 \\
\bottomrule
\end{longtable}
\normalsize

\begin{table}[ht]
\centering
\caption{WikiText-2 test loss (nats) of the models trained for 65.5M tokens at context length 1,024, mean $\pm$ standard deviation over eight seeds. The evaluation scores all 286,178 test tokens and resets the context at document boundaries. The best mean is in bold.}
\label{tab:wikitext-long}
\small
\begin{tabular}{@{}lcr@{}}
\toprule
Method & Momentum & WikiText-2 test loss \\
\midrule
NorMuon & N/N & $5.8705\pm0.0337$ \\
Nesterov Muon & N/N & $5.8785\pm0.0265$ \\
Nesterov Muon, NorMuon decay & N/N & $5.8438\pm0.0220$ \\
Compass, NorMuon decay & N/N & $5.8381\pm0.0319$ \\
Compass, $2\times$ NorMuon decay & N/N & $\mathbf{5.8031}\pm0.0350$ \\
Compass on the NorMuon update & N/N & $5.8592\pm0.0395$ \\
\bottomrule
\end{tabular}
\end{table}

\begin{table}[ht]
\centering
\caption{Additional paired comparisons at 65.5M tokens. The validation rows use all evaluation seeds at twice NorMuon's per-step decay: seeds 250 to 281 for the comparisons with NorMuon and 250 to 295 for Nesterov Muon. The full-validation rows score the same models on the whole validation split of about 3M tokens, 23 times the usual 131,072, and their paired $t$-test $p$-values are below 0.001, 0.014, and 0.035; for the validation rows they are 0.002, 0.051, and 0.048. The WikiText-2 rows use the models of Table~\ref{tab:wikitext-long}, and their exact two-sided paired permutation $p$-values are 0.023 and 0.031. Differences are Compass minus the comparison method, in nats, with 95\% confidence intervals from paired $t$-tests; wins count the seeds on which Compass has the lower loss. ``NorMuon update'' marks Compass on the NorMuon update.}
\label{tab:extra-paired}
\small
\setlength{\tabcolsep}{4pt}
\begin{tabular}{@{}lllrrr@{}}
\toprule
Loss & Compass & Comparison & Difference & 95\% CI & Wins \\
\midrule
Validation & N/N, $2\times$ & NorMuon, $2\times$ & $-0.00642$ & $[-0.01027,-0.00256]$ & 22/32 \\
Validation & N/N, $2\times$ & Nesterov Muon, $2\times$ & $-0.00242$ & $[-0.00485,0.00001]$ & 30/46 \\
Validation & NorMuon update, $2\times$ & NorMuon, $2\times$ & $-0.00330$ & $[-0.00656,-0.00004]$ & 20/32 \\
Full validation & N/N, $2\times$ & NorMuon, $2\times$ & $-0.00744$ & $[-0.01075,-0.00412]$ & 25/32 \\
Full validation & N/N, $2\times$ & Nesterov Muon, $2\times$ & $-0.00280$ & $[-0.00499,-0.00060]$ & 29/46 \\
Full validation & NorMuon update, $2\times$ & NorMuon, $2\times$ & $-0.00303$ & $[-0.00583,-0.00023]$ & 20/32 \\
WikiText-2 & N/N, $2\times$ & NorMuon & $-0.0674$ & $[-0.1119,-0.0230]$ & 7/8 \\
WikiText-2 & N/N, $2\times$ & Nesterov Muon, $1\times$ & $-0.0408$ & $[-0.0759,-0.0056]$ & 7/8 \\
\bottomrule
\end{tabular}
\end{table}

Table~\ref{tab:order-replication} gives the multilingual replication. Pooled with the first set of runs, it gives 16 paired differences against Nesterov Muon at 0.000625 under blocked order, with mean $-0.01995$ nats (95\% CI $[-0.02992,-0.00998]$), lower on 14 of the 16 seeds (exact permutation $p=0.0012$). A similar blocked-order gap appears on the new seeds.

\begin{table}[ht]
\centering
\caption{\textbf{Multilingual replication, seeds 1250 to 1257.} Held-out test loss of Compass minus each comparison method, in nats, with 95\% confidence intervals from paired $t$-tests and exact two-sided paired permutation $p$-values. The four blocked-order comparisons were chosen before training, $t$ is the paired $t$ statistic, and the last row gives the shuffled order. The replication uses the data, schedule, and learning rates of Section~\ref{sec:order-results} and differs from the first set of runs only in its seeds.}
\label{tab:order-replication}
\footnotesize
\setlength{\tabcolsep}{3pt}
\begin{tabular}{@{}lllrrrrr@{}}
\toprule
Order & Compass & Comparison & Difference & 95\% CI & Wins & $t$ & $p$ \\
\midrule
Blocked & N/H & Nesterov Muon, 0.000625 & $-0.01719$ & $[-0.03641,+0.00204]$ & 7/8 & $-2.11$ & 0.070 \\
Blocked & N/H & Nesterov Muon, 0.00125 & $-0.00886$ & $[-0.02087,+0.00314]$ & 5/8 & $-1.75$ & 0.133 \\
Blocked & N/N & Nesterov Muon, 0.00125 & $-0.00449$ & $[-0.01879,+0.00981]$ & 6/8 & $-0.74$ & 0.500 \\
Blocked & N/H & Constant expert scale & $-0.00223$ & $[-0.01341,+0.00894]$ & 5/8 & $-0.47$ & 0.648 \\
\midrule
Shuffled & N/H & Nesterov Muon, 0.00125 & $+0.00016$ & $[-0.00250,+0.00283]$ & 2/8 & $0.14$ & 0.922 \\
\bottomrule
\end{tabular}
\end{table}

NorMuon and Compass on the NorMuon update were trained after the other methods of Table~\ref{tab:order-test}, on the same data, seeds, and system, and Table~\ref{tab:order-normuon} gives their comparisons. The lower block of Table~\ref{tab:order-normuon} repeats these comparisons on the seeds of Table~\ref{tab:order-replication}, with the learning rates fixed before training. Pooled with the first set of runs, it gives 16 paired differences against NorMuon under blocked order, with mean $-0.02525$ nats (95\% CI $[-0.03654,-0.01396]$), lower on 13 of the 16 seeds (exact permutation $p=0.0005$), and the mean difference favors Compass in all eight languages. The gap to NorMuon under blocked order therefore also holds on new seeds.

\begin{table}[ht]
\centering
\caption{\textbf{Compass on the NorMuon update in the multilingual experiment.} Held-out test loss of Compass on the NorMuon update minus each comparison method, in nats, with 95\% confidence intervals from paired $t$-tests and exact two-sided paired permutation $p$-values. The upper block uses seeds 250 to 257, and the lower block repeats the same comparisons on seeds 1250 to 1257 with the same learning rates. The first four comparisons were chosen before training, and $t$ is the paired $t$ statistic. The last row of each block gives the difference between the blocked and shuffled gaps relative to NorMuon.}
\label{tab:order-normuon}
\footnotesize
\setlength{\tabcolsep}{3pt}
\begin{tabular}{@{}llrrrrr@{}}
\toprule
Order & Comparison & Difference & 95\% CI & Wins & $t$ & $p$ \\
\midrule
\multicolumn{7}{@{}l}{\textit{Seeds 250 to 257}}\\
Blocked & NorMuon & $-0.02346$ & $[-0.04128,-0.00563]$ & 6/8 & $-3.11$ & 0.031 \\
Blocked & Compass (N/H) & $-0.01788$ & $[-0.03087,-0.00489]$ & 7/8 & $-3.25$ & 0.016 \\
Shuffled & NorMuon & $+0.00106$ & $[-0.00079,+0.00292]$ & 3/8 & $1.35$ & 0.219 \\
Shuffled & Nesterov Muon & $-0.00270$ & $[-0.00569,+0.00028]$ & 6/8 & $-2.14$ & 0.086 \\
\addlinespace[2pt]
Blocked minus shuffled & NorMuon & $-0.02452$ & $[-0.04324,-0.00580]$ & 7/8 & $-3.10$ & 0.023 \\
\midrule
\multicolumn{7}{@{}l}{\textit{Seeds 1250 to 1257}}\\
Blocked & NorMuon & $-0.02705$ & $[-0.04574,-0.00835]$ & 7/8 & $-3.42$ & 0.016 \\
Blocked & Compass (N/H) & $-0.02439$ & $[-0.03985,-0.00894]$ & 8/8 & $-3.73$ & 0.008 \\
Shuffled & NorMuon & $+0.00016$ & $[-0.00200,+0.00231]$ & 4/8 & $0.17$ & 0.875 \\
Shuffled & Nesterov Muon & $-0.00286$ & $[-0.00505,-0.00067]$ & 7/8 & $-3.09$ & 0.016 \\
\addlinespace[2pt]
Blocked minus shuffled & NorMuon & $-0.02721$ & $[-0.04521,-0.00920]$ & 7/8 & $-3.57$ & 0.016 \\
\bottomrule
\end{tabular}
\end{table}

\subsection{Language-specific results and training dynamics}
\label{sec:language-results}
\begin{figure}[ht]
\centering
\includegraphics[width=\linewidth]{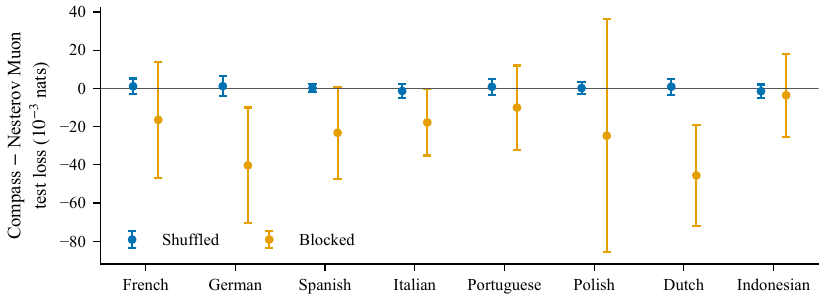}
\caption{\textbf{Test loss by language.} Compass minus Nesterov Muon under shuffled and blocked order, with 95\% confidence intervals over eight seeds. Results for different languages come from the same models. Under blocked order, all eight languages favor Compass, and under shuffled order, every mean difference is at most 0.0015 nats in magnitude.}
\label{fig:language-effect}
\end{figure}

Figure~\ref{fig:language-effect} shows the test loss difference for each language.

The fraction of experts that receive zero tokens is 0.47\% under blocked order and 4.11\% under shuffled order. The mean $c\alpha$ over the four training quarters is (0.899, 0.915, 0.957, 0.993) under shuffling and (0.910, 0.925, 0.963, 0.994) under blocking, and the family factors continue to vary across experts until the end of training.

\FloatBarrier\bigskip
\section{Numerical definition of Compass}
\label{sec:numerics}
\label{app:method-details}

Equations~\eqref{eq:momentum} to~\eqref{eq:update} and the schedule of Equation~\eqref{eq:schedule} define Compass in exact arithmetic. Here, we add the small constants of our code (Table~\ref{tab:method-symbols}), the order of operations (Section~\ref{sec:extended-arithmetic}), and properties of the two factors (Section~\ref{sec:properties}). All runs keep parameters, gradients, and optimizer states in FP32.

\subsection{Symbols and constants}

\begin{table}[ht]
\centering
\small
\caption{Symbols for one layer and one expert projection. A family contains
the corresponding projection from every expert in that layer.}
\label{tab:method-symbols}
\begin{tabular}{p{0.23\linewidth}p{0.69\linewidth}}
\toprule
Symbol & Definition \\
\midrule
$t,T$ & Training step, starting at one, and total number of training steps. \\
$e,E$ & Expert index and number of experts in the family. \\
$r,m,n$ & Row index, number of rows and number of columns of an expert matrix. \\
$W_{e,t}$ & Expert weight matrix before step $t$. \\
$G_{e,t}$ & Expert gradient after global gradient clipping. \\
$M_{e,t}$ & Momentum after its update at step $t$, with $M_{e,0}=0$. \\
$D_{e,t}$ & Input to the Newton-Schulz map, $G_{e,t}+\beta M_{e,t}$. \\
$\mathcal N_5$ & Five iterations of the quintic map in Equation~\eqref{eq:ns}. \\
$s,O_{e,t}$ & Shape factor $0.2\sqrt{\max(m,n)}$ and direction $\mathcal N_5(D_{e,t})$. \\
$z_{e,t},a_{e,t}$ & Cosine of the full matrix and its positive part plus $\epsilon_{\rm score}$. \\
$u_{e,t},v_{e,t},c_{e,t}$ & Normalized expert score, capped score and family factor. \\
$z_{er,t},b_{er,t}$ & Row cosine and its positive part plus $\epsilon_{\rm score}$. \\
$q_{er,t},h_{er,t}$ & Capped row score and row factor after the schedule interpolation. \\
$\alpha_{e,t}$ & Scalar radius computed from the weighted squared row norms. \\
$V_{e,t}$ & Expert update direction, $c_{e,t}\alpha_{e,t}sO_{e,t}$. \\
$\eta_{\max},\eta_t$ & Peak matrix learning rate and its value at step $t$. \\
$\lambda_t$ & Schedule multiplier $\eta_t/\eta_{\max}$. \\
$\beta,\omega$ & Muon momentum coefficient $0.95$ and weight decay $0.01$. \\
$\epsilon_{\rm NS}$ & Normalization constant $10^{-7}$ of the Newton-Schulz map. \\
$\epsilon_{\rm cos}$ & Constant $10^{-12}$ in the cosine denominator. \\
$\epsilon_{\rm score}$ & Constant $10^{-6}$ added to the expert and row scores. \\
$\theta_t,g_t$ & Parameter updated by AdamW and its clipped gradient. \\
$m_t,v_t$ & First and second AdamW moments. \\
$\widehat m_t,\widehat v_t,\tau_t$ & Bias-corrected moments and the number of parameter updates. \\
$\beta_1,\beta_2,\epsilon_A$ & AdamW constants $0.9$, $0.95$ and $10^{-8}$. \\
$\eta_t^{\rm aux},d_t$ & AdamW learning rate $\eta^{\rm aux}_{\max}\lambda_t$ and denominator $\sqrt{\widehat v_t}+\epsilon_A$. \\
\bottomrule
\end{tabular}
\end{table}

For matrices $A$ and $B$ of the same shape, $\langle A,B\rangle_F=\sum_{r,j}A_{rj}B_{rj}$ is the Frobenius inner product and $\lVert A\rVert_F=\sqrt{\langle A,A\rangle_F}$ its norm, and $[x]_+=\max(x,0)$ is the positive part. A family mean gives each expert matrix the same weight, and a row mean gives each row of a matrix the same weight.

\subsection{Order of operations}
\label{sec:extended-arithmetic}

We update the momentum buffer before forming the Nesterov input, so
\begin{equation}
D_{e,t}=G_{e,t}+\beta M_{e,t}
       =(1+\beta)G_{e,t}+\beta^2M_{e,t-1}.
\label{eq:expanded-refresh}
\end{equation}
The Newton-Schulz map divides its input by $\lVert D_{e,t}\rVert_F+\epsilon_{\rm NS}$ and transposes it when $m>n$. It then runs five iterations with the coefficients of \citet{jordan2024muon} and transposes the result back. Each iteration computes
\begin{equation}
A_k=X_kX_k^\top,\qquad
X_{k+1}=3.4445X_k+
\left(-4.7750A_k+2.0315(A_kA_k)\right)X_k.
\label{eq:ns}
\end{equation}

We compute both alignments from the unscaled direction $O_{e,t}=\mathcal N_5(D_{e,t})$ and the current clipped gradient, with $\epsilon_{\rm cos}$ added to each cosine denominator. For the full matrix,
\begin{equation}
z_{e,t}=\frac{\langle O_{e,t},G_{e,t}\rangle_F}
{\lVert O_{e,t}\rVert_F\lVert G_{e,t}\rVert_F+\epsilon_{\rm cos}},
\label{eq:exact-cosine}
\end{equation}
and the row cosine is the same expression for one row, with vector inner products and Euclidean norms. A row score is the positive part of the row cosine plus $\epsilon_{\rm score}$, divided by its mean over the rows and capped at one. Expert scores get the same offset and cap, and the two family normalizations divide by their means, clamped below at $\epsilon_{\rm score}$.

We compute inner products and norms in FP32. For family normalization, we convert values to double precision before computing the cosine and the two family means; the row computations stay in FP32. The row factors interpolate linearly between the capped row scores at the first step and one at the final step, where the learning rate is zero. Writing $h_e$ for the row factors, the radius is
\begin{equation}
 \alpha_e=\begin{cases}
 \sqrt{\displaystyle\frac{\sum_r h_{er}^2\lVert O_{er}\rVert_2^2}
 {\max(\lVert O_e\rVert_F^2,10^{-30})}}, & \lVert O_e\rVert_F^2>0,\\
 1, & \lVert O_e\rVert_F^2=0.
 \end{cases}
\label{eq:extended-radius}
\end{equation}
Weight decay is applied first, and the weight update is
\begin{equation}
 W_e\mathrel{+}=-\bigl(\eta_t c_e\bigr)s\,\alpha_e O_e,\qquad
 s=0.2\sqrt{\max(m,n)}.
\label{eq:extended-addition}
\end{equation}
The radius applies to the direction, and the family and shape factors enter the scalar coefficient. In the N/H configuration, attention and router matrices use heavy-ball momentum. Heavy-ball Muon also uses it for the expert matrices, with both Compass factors set to one.

Each layer has one expert family per projection; row scores are normalized within an expert matrix, and expert scores are normalized across the experts of a family.

\subsection{AdamW update}

Embeddings, the output head, biases, and normalization parameters use AdamW \citep{loshchilov2019adamw}. For such a parameter $\theta_t$ with clipped gradient $g_t$, let $\tau_t$ count the steps up to $t$ on which it had a gradient; the moments start at zero. On a step with a gradient, with $\beta_1=0.9$ and $\beta_2=0.95$,
\begin{align}
m_t&=\beta_1m_{t-1}+(1-\beta_1)g_t,\\
v_t&=\beta_2v_{t-1}+(1-\beta_2)g_t\odot g_t,\\
\widehat m_t&=\frac{m_t}{1-\beta_1^{\tau_t}},\qquad
\widehat v_t=\frac{v_t}{1-\beta_2^{\tau_t}},
\label{eq:aux-moments}
\end{align}
where $\odot$ is elementwise multiplication. The step is $\widehat m_t/d_t$ with $d_t=\sqrt{\widehat v_t}+\epsilon_A$ and $\epsilon_A=10^{-8}$, scaled by the learning rate $\eta_t^{\rm aux}=\eta^{\rm aux}_{\max}\lambda_t$, and the update uses decoupled weight decay 0.01. On a step without a gradient, the parameters and $\tau_t$ stay as they are. All Muon-based methods in a comparison use the same peak rate $\eta^{\rm aux}_{\max}$ (Section~\ref{sec:experiments} and Table~\ref{tab:extended-settings}), and the AdamW and Magma baselines use their tuned learning rate for all parameters.

\subsection{Schedule}

The schedule of Equation~\eqref{eq:schedule} starts at the peak learning rate with zero warmup and reaches zero at the final step. The row factors $h_{er,t}=1+\lambda_t(q_{er,t}-1)$ therefore equal $q_{er,t}$ at the first step and one at the final step, while the family factor $c_{e,t}$ uses the current expert scores at every step. The momentum is also updated at the final step, where the parameter update and the decay are zero.

\subsection{Properties of the scalar factors}
\label{sec:properties}

The following properties hold in exact arithmetic with the constants defined above.

\paragraph{Family mean.}
For a family with $E\epsilon_{\rm score}\le1$, the family factors in Equation~\eqref{eq:expertgate} satisfy
\begin{equation}
c_{e,t}>0,\qquad \frac1E\sum_e c_{e,t}=1.
\label{eq:family-mean-property}
\end{equation}
Since $a_{e,t}\ge\epsilon_{\rm score}$, the mean of $a$ is at least $\epsilon_{\rm score}$, so $u_{e,t}$ has mean one and at least one entry is at least one. After capping, at least one entry has $v_{e,t}=1$, so $E^{-1}\sum_e v_{e,t}\ge1/E\ge\epsilon_{\rm score}$. The final denominator is therefore the mean of $v$, which gives Equation~\eqref{eq:family-mean-property}. All families in our models satisfy the condition, since $E\le64$. Individual factors can exceed one, and the total norm of the expert updates also depends on their matrix norms and radii.

\paragraph{Radius bounds.}
For $0\le\lambda_t\le1$, the row construction yields $0<q_{er,t}\le1$ and $1-\lambda_t\le h_{er,t}\le1$. Define
\begin{equation}
S_e=\sum_r\lVert O_{er,t}\rVert_2^2,\qquad
H_e=\sum_r h_{er,t}^2\lVert O_{er,t}\rVert_2^2.
\end{equation}
Since $0\le H_e\le S_e$, Equation~\eqref{eq:extended-radius} gives
\begin{equation}
0\le\alpha_{e,t}\le1.
\label{eq:radius-bounds}
\end{equation}
Above the floor in the denominator of Equation~\eqref{eq:extended-radius}, $(1-\lambda_t)^2S_e\le H_e\le S_e$ and hence $1-\lambda_t\le\alpha_{e,t}\le1$. The product $c_{e,t}\alpha_{e,t}$ can exceed one, since $c_{e,t}$ can exceed one.

\paragraph{Norm and direction.}
Above the floor,
\begin{align}
\lVert\alpha_{e,t}O_{e,t}\rVert_F^2
&=\alpha_{e,t}^2S_e\\
&=H_e\\
&=\lVert\operatorname{diag}(h_{e,t})O_{e,t}\rVert_F^2,
\label{eq:radius-energy-proof}
\end{align}
where $\operatorname{diag}(h_{e,t})$ is the diagonal matrix of row factors. The matrix $\alpha_{e,t}O_{e,t}$ therefore has the norm of the row-weighted matrix and the direction of $O_{e,t}$ (Figure~\ref{fig:geometry}).

\begin{figure}[ht]
\centering
\includegraphics[width=0.5\linewidth]{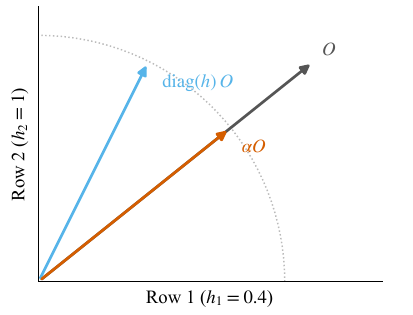}
\caption{\textbf{The scalar radius for a matrix with two rows} and row factors $h=(0.4,1)$. Row weighting gives $\operatorname{diag}(h)\,O$ and changes the direction. Compass applies $\alpha O$, which has the norm of $\operatorname{diag}(h)\,O$ (dotted arc) and the direction of $O$. The family factor is omitted.}
\label{fig:geometry}
\end{figure}

Let $O_{e,t}=L\Sigma R^\top$ be a singular value decomposition. For $\kappa=c_{e,t}\alpha_{e,t}>0$,
\begin{equation}
V_{e,t}=sL(\kappa\Sigma)R^\top,\qquad
\frac{V_{e,t}}{\lVert V_{e,t}\rVert_F}
=\frac{O_{e,t}}{\lVert O_{e,t}\rVert_F},
\label{eq:direction-property}
\end{equation}
so the update keeps the singular vectors of $O_{e,t}$ and the ratios between its singular values. The weight change also includes weight decay.

\paragraph{Negative alignment.}
A matrix whose direction has a negative cosine with its gradient receives the score $\epsilon_{\rm score}$. When the other experts in its family have scores well above $\epsilon_{\rm score}$, its family factor and therefore its step are close to zero.

\FloatBarrier\bigskip
\section{Comparison with heavy-ball Muon at two widths}
\label{sec:width}
\label{sec:experiment_details}

These runs train six-layer mixture-of-experts models from random initialization on FineWeb-Edu, with heavy-ball Muon as the baseline. They have their own learning-rate sweeps, so we report them separately from Section~\ref{sec:experiments}. The two sets of runs share seed numbers and selected learning rates. These runs used H100 GPUs, and their losses differ slightly from those of the same seeds in Table~\ref{tab:current-width}. Training windows come from the training part of a 60M-token sample of the \texttt{sample-10BT} subset, and both methods use the settings of Table~\ref{tab:settings}.

\begin{table}[ht]
\centering
\caption{Settings of the runs at the two widths. Both methods use the same values.}
\label{tab:settings}
\small
\begin{tabular}{@{}lrr@{}}
\toprule
Setting & Width 1,024 & Width 2,048\\
\midrule
Layers & 6 & 6\\
Attention heads & 16 & 16\\
Experts per layer & 8 & 8\\
Selected experts per token & 2 & 2\\
Expert hidden width & 2,048 & 4,096\\
Sequences per step & 16 & 16\\
Sequence length & 256 & 256\\
Steps & 2,000 & 2,000\\
Training tokens per run & 8,192,000 & 8,192,000\\
Validation tokens per evaluation & 32,768 & 32,768\\
Evaluation seeds & 8 & 8\\
\bottomrule
\end{tabular}
\end{table}

Both methods use auxiliary-loss-free load balancing \citep{wang2024lossfree}. The router and its bias updates, the architecture, the FP32 arithmetic, the AdamW learning rate of 0.003, and the validation batches match Appendix~\ref{sec:new-details}, and the schedule follows Equation~\eqref{eq:schedule} with $T=2000$. The bias updates are independent of the Compass factors.

We selected the learning rates of both methods (Table~\ref{tab:width-results}) on tuning seeds before running the evaluation seeds.

\subsection{Metrics and tests}
\label{sec:statistics}

We use the paired statistics from Appendix~\ref{sec:new-statistics}, with heavy-ball Muon as the baseline; a win is a seed on which Compass has the lower final validation loss. With eight seeds, the exact permutation test has $2^8=256$ sign assignments, so its smallest two-sided $p$-value is $2/256\approx0.0078$, the value at both widths in Table~\ref{tab:width-results}. Figure~\ref{fig:widthpairs} shows each seed.

\begin{table}[ht]
\centering
\caption{\textbf{Comparison with heavy-ball Muon at two widths on FineWeb-Edu.} Final validation loss over eight evaluation seeds, with 8.192M training tokens per run. Values are mean $\pm$ sample standard deviation, and $\Delta$ is Compass minus heavy-ball Muon with its 95\% confidence interval and statistic from a paired $t$-test. Heavy-ball Muon and Compass use learning rates 0.001125 and 0.00125 at width 1,024 and 0.0006875 and 0.00075 at width 2,048, selected on separate tuning seeds. The last column is the exact two-sided paired permutation $p$-value.}
\label{tab:width-results}
\fontsize{8.5}{10.5}\selectfont
\setlength{\tabcolsep}{2.8pt}
\begin{tabular}{@{}rrrrrrrr@{}}
\toprule
Width & Heavy-ball Muon & Compass & $\Delta$ & 95\% CI & $t$ & Wins & $p$ \\
\midrule
1,024 & $5.4661\pm0.0078$ & $5.4551\pm0.0075$ & $-0.0110\pm0.0065$ & $[-0.01638,-0.00555]$ & $-4.79$ & 8/8 & 0.0078 \\
2,048 & $5.3291\pm0.0114$ & $5.3192\pm0.0094$ & $-0.0099\pm0.0049$ & $[-0.01393,-0.00578]$ & $-5.72$ & 8/8 & 0.0078 \\
\bottomrule
\end{tabular}
\end{table}

\begin{figure}[ht]
\centering
\includegraphics[width=\linewidth]{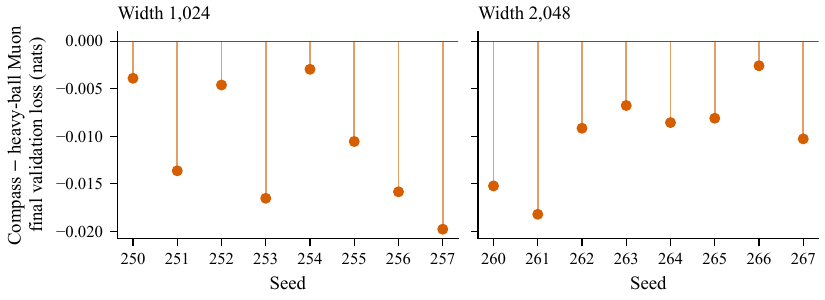}
\caption{\textbf{Per-seed differences in the heavy-ball comparison.} Final validation loss of Compass minus heavy-ball Muon for each evaluation seed. All 16 seeds favor Compass.}
\label{fig:widthpairs}
\end{figure}

Figure~\ref{fig:widthdevelopment} shows the sweep over eight learning rates at width 2,048 on tuning seeds 2600 to 2604. For both methods, the best rate on these seeds is the one we used for the evaluation seeds. At these rates, Compass is lower on four of the five tuning seeds, by an average of 0.00754 nats.

\begin{figure}[ht]
\centering
\includegraphics[width=0.7\linewidth]{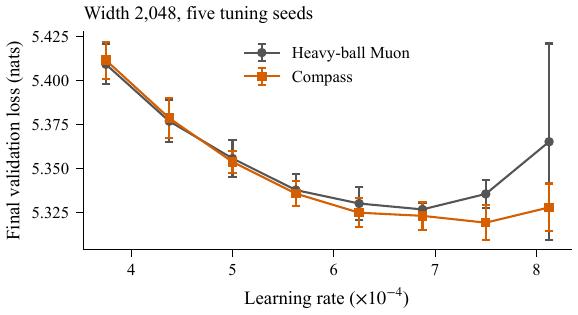}
\caption{\textbf{Learning-rate sweep at width 2,048.} Points are means over five tuning seeds, and error bars are one sample standard deviation.}
\label{fig:widthdevelopment}
\end{figure}

\FloatBarrier\bigskip
\section{Continued pretraining of JetMoE}
\label{sec:extended-setup}

We apply Compass to JetMoE-8B, a released MoE checkpoint whose attention layers and feedforward layers both contain experts \citep{shen2024jetmoe}. Three runs with different seeds lower its validation loss on FineWeb-Edu by about 0.043 nats after 8.2M training tokens (Table~\ref{tab:jetmoe-results}). These runs show that Compass applies to a released model of this size.

\subsection{Model, data and settings}

JetMoE-8B has expert attention layers and expert feedforward layers, with about 8B total parameters and 2B active parameters per token \citep{shen2024jetmoe}. The runs start from the released JetMoE-8B checkpoint with a new optimizer state, and Compass updates both kinds of expert matrices. Packed expert tensors are split along the expert axis, so the last two axes form each expert matrix, and a family groups the matching projections within a layer and module. Router matrices use heavy-ball Muon, and embeddings, the output head, biases, and normalization parameters use AdamW.

The data are the \texttt{sample-10BT} subset of FineWeb-Edu, using the model's released tokenizer. Documents are split into training, validation, and test data in proportions 98\%, 1\%, and 1\% before packing and are separated by an end-of-sequence token. The context length is 2,048, the training order is set by the seed, and the evaluation order uses seed 314159. Each validation evaluation scores 64 batches of eight sequences, for $64\times8\times2047=1{,}048{,}064$ target tokens, and the validation loss is the mean negative log-likelihood over these tokens. The best checkpoint of a run is the one with the lowest validation loss.

\begin{table}[ht]
\centering
\caption{Settings of the JetMoE runs. Learning rates are peak values.}
\label{tab:extended-settings}
\small
\begin{tabular}{@{}ll@{}}
\toprule
Setting & Value \\
\midrule
Parameters, gradients and optimizer states & FP32 \\
GPUs per run & $2\times$H100 \\
Tokens per step & 8,192 \\
Sequence length & 2,048 \\
Accumulated micro-batches per GPU & 2 \\
Sequences per micro-batch per GPU & 1 \\
Steps & 1,000 \\
Tokens per run & 8,192,000 \\
Matrix learning rate & 0.0001 \\
AdamW learning rate & 0.00002 \\
Learning-rate schedule & Cosine, Equation~\eqref{eq:schedule}, zero warmup \\
Gradient clipping & Global norm 1 \\
Muon momentum & 0.95 \\
Weight decay & 0.01 \\
Validation interval (steps) & 100 \\
Seeds & 991, 997, 1009 \\
Software & Python 3.11, PyTorch 2.10.0, CUDA 12.8 \\
 & Transformers 4.57.6, NumPy 1.26.4 \\
\bottomrule
\end{tabular}
\end{table}

Gradients are accumulated over micro-batches, averaged across the two GPUs in FP32 and then clipped, and automatic mixed precision is disabled. To save GPU memory, we update each expert family on one GPU, keep its momentum in host memory between steps, and broadcast the new parameters to the other GPU; the update is unchanged. Attention uses the deterministic implementation with TF32 off, activation checkpointing is on, and the update follows the order of operations in Appendix~\ref{sec:extended-arithmetic}.

\subsection{Per-seed results}
\label{sec:jetmoe-details}

\begin{table}[ht]
\centering
\caption{\textbf{Compass on JetMoE-8B.} Validation loss on FineWeb-Edu after 1,000 steps and 8.192M training tokens, with AdamW learning rate 0.00002. The first three rows use matrix learning rate 0.0001, and the last row is an additional run at 0.00005. The initial validation loss is 2.157508.}
\label{tab:jetmoe-results}
\small
\begin{tabular}{@{}lrrr@{}}
\toprule
Seed & Matrix learning rate & Best loss & Final loss \\
\midrule
991 & 0.0001 & 2.104669 & 2.115326 \\
997 & 0.0001 & 2.102527 & 2.115145 \\
1009 & 0.0001 & 2.104960 & 2.113939 \\
\midrule
Mean $\pm$ SD & 0.0001 & $2.104052\pm0.001329$ & $2.114804\pm0.000754$ \\
\midrule
991 & 0.00005 & 2.072954 & 2.075305 \\
\bottomrule
\end{tabular}
\end{table}

All three seeds in Table~\ref{tab:jetmoe-results} reach their lowest validation loss at step 100. An additional run with half the matrix learning rate (0.00005, seed 991) reaches a lower loss, shown in the last row of the table. A loss difference $\delta$ corresponds to a perplexity change of $100(1-e^{\delta})$ percent.

\FloatBarrier\bigskip
\section{Ablations of the alignment scores}
\label{sec:earlier-alignment}

These runs vary how Compass alignment scores affect the update, with heavy-ball Muon as a baseline. They use a separate data split, H100 GPUs, and FP32 arithmetic, so absolute losses differ from those in Section~\ref{sec:experiments}; heavy-ball Muon reaches 5.352 at width 1,024 here.

\paragraph{Setup.}
The models have six layers, eight experts per layer, and top-2 routing, and they train with context length 256, 16 sequences per step, and 2,000 steps (8.192M training tokens). Widths 1,024, 1,536, 2,048, and 2,560 give 329.8M, 664.5M, 1,112.5M, and 1,673.8M total parameters. We train on a 60M-token FineWeb-Edu sample (95\% training and 5\% validation tokens) and on a DCLM-Baseline sample of the same size \citep{li2024dclm}. Parameters, gradients, and optimizer states are in FP32, and each run uses one NVIDIA H100 GPU with PyTorch 2.7. Heavy-ball Muon has the shape factor $0.2\sqrt{\max(m,n)}$ and the AdamW settings of the variants. We tuned its learning rate on a grid at each width and obtained 0.00125 at width 1,024 on both corpora and 0.000625 at the larger widths (the width-1,536 grid had the two rates 0.000625 and 0.00125). All variants use this rate. Variants that we also swept over two or three learning rates on seeds 0 to 2 picked the same rate. The heavy-ball grids include seeds that the comparisons also use. This favors the baseline.

\paragraph{Variants.}
The family-factor variant multiplies the orthogonalized update of expert $e$ by $c_e$ (Equation~\eqref{eq:expertgate}) with the radius set to one, and its uncapped form uses $u_e$. Row weighting multiplies row $r$ of $O_e$ by $q_{er}/\overline q_e$ (Equation~\eqref{eq:rowgate} with $\lambda_t=1$). This changes the direction of the update, and the scalar radius of Compass keeps the direction. The combined variant (family $\times$ row) applies both. In these runs every matrix uses heavy-ball momentum, so $O_e$ comes from $M_e$, and Excitation follows its published utilization-based equations. The controls replace the score with the gradient norm, the momentum cosine $\cos(M_e, G_e)$ or the reciprocal of $a_e$, permute the capped factors among the experts of each family at every step, or drop the mean-one normalization, which multiplies each expert step by $a_e$.

\begin{table}[t]
\centering
\caption{\textbf{Ablations of the alignment scores against heavy-ball Muon.} Final validation loss of each variant minus heavy-ball Muon at its selected learning rate; negative values favor the variant, and parentheses give wins over the number of seeds. Top: model widths and corpora; only settings with runs have entries. Bottom: variants and controls at width 1,024 on FineWeb-Edu with seeds 0 to 7, with 95\% confidence intervals from paired $t$-tests. The scores $a_e$ and $b_{er}$ are the positive parts of the matrix and row cosines plus $10^{-6}$ (Section~\ref{sec:method}), and Cap marks capping at the family or row mean.}
\label{tab:earlier-alignment}
\footnotesize
\setlength{\tabcolsep}{3pt}
\begin{tabular}{@{}lrrrrr@{}}
\toprule
& \multicolumn{4}{c}{FineWeb-Edu, width} & DCLM \\
\cmidrule(lr){2-5}\cmidrule(l){6-6}
Variant & 1,024 & 1,536 & 2,048 & 2,560 & 1,024 \\
\midrule
Family factor & $-0.0057$ (11/16) & $-0.0041$ (5/8) & $-0.0049$ (6/8) & $-0.0070$ (7/8) & $-0.0041$ (6/8) \\
Family, uncapped & $-0.0039$ (13/16) & $-0.0039$ (7/8) & $-0.0064$ (9/9) & $+0.0175$ (5/8) & \\
Family $\times$ row & $-0.0059$ (14/16) & & $-0.0078$ (8/8) & & $-0.0075$ (6/8) \\
Row weighting & $-0.0034$ (11/16) & & $-0.0010$ (3/8) & & $-0.0069$ (7/8) \\
Excitation & $+0.0035$ (2/8) & & $-0.0080$ (7/8) & & $+0.0029$ (2/8) \\
\bottomrule
\end{tabular}

\vspace{6pt}
\begin{tabular}{@{}llcr@{}}
\toprule
Level & Score & Cap & Difference [95\% CI] (wins) \\
\midrule
Family & $a_e$ & yes & $-0.00753\;[-0.01391,-0.00114]$ (6/8) \\
Family & $a_e$ & no & $-0.00393\;[-0.00896,+0.00109]$ (7/8) \\
Family & $a_e$, factors permuted & yes & $-0.00066\;[-0.00457,+0.00326]$ (5/8) \\
Family & $\norm{G_e}_F$ & yes & $-0.00336\;[-0.01035,+0.00363]$ (5/8) \\
Family & $[\cos(M_e,G_e)]_+$ & no & $-0.00419\;[-0.00866,+0.00028]$ (6/8) \\
Family & $1/a_e$ & no & $+0.01812\;[+0.00973,+0.02651]$ (1/8) \\
Family & $a_e$ without mean-one normalization & & $+0.19909\;[+0.19317,+0.20500]$ (0/8) \\
Row & $b_{er}$ & yes & $-0.00482\;[-0.01016,+0.00052]$ (7/8) \\
Row & $b_{er}$ & no & $-0.00871\;[-0.01344,-0.00398]$ (8/8) \\
Row & $1/b_{er}$ & no & $+0.45838\;[+0.44986,+0.46691]$ (0/8) \\
\bottomrule
\end{tabular}
\end{table}

\paragraph{Results.}
Table~\ref{tab:earlier-alignment} lists every difference. The combined variant lowers the loss in all three settings where we ran it, and its 95\% confidence intervals lie below zero in each ($[-0.00883,-0.00302]$ at width 1,024, $[-0.01346,-0.00209]$ at width 2,048, and $[-0.01414,-0.00090]$ on DCLM). The family factor alone lowers the mean loss at all four widths and on DCLM, with intervals below zero at widths 1,024 and 2,560 ($[-0.00929,-0.00203]$ and $[-0.01058,-0.00341]$). The uncapped family factor lowers the loss up to width 2,048. At width 2,560, two of its seeds end 0.068 and 0.109 nats above heavy-ball Muon, and with the cap the same seeds change by $+0.0001$ and $-0.0127$. Row weighting alone lowers the loss at width 1,024 on both corpora. Excitation has a higher mean loss than heavy-ball Muon at width 1,024 on both corpora and comes close to the combined variant at width 2,048 (95\% CI $[-0.01646,0.00049]$).

The controls at width 1,024 use seeds 0 to 7 (bottom of Table~\ref{tab:earlier-alignment}). Permuting the capped factors among the experts of each family removes most of the benefit of the family factor, and the reciprocal score or dropping the mean-one normalization raises the loss sharply. The cosine factor is 0.00417 nats lower than a factor computed from gradient norms (95\% CI $[-0.00757,-0.00076]$). In the uncapped form, the momentum cosine $\cos(M_e,G_e)$ and the update cosine $\cos(O_e,G_e)$ give nearly equal results (difference 0.00026 nats, 95\% CI $[-0.00299,0.00350]$). At the row level, both weightings lower the loss, and the reciprocal row score raises it.

\paragraph{Relation to Compass.}
Compass keeps the capped family factor and turns the capped row factors into a single scalar radius, preserving the orthogonalized direction that row weighting changes. Across these runs, alignment-based scaling lowers the loss against tuned heavy-ball Muon at both the expert and row levels, while the permuted and reciprocal controls remove the gain or raise the loss.

\clearpage
\section{Theory}
\label{sec:cth-proofs}

We state the setting, prove the perturbation bound of Theorem~\ref{thm:cth-stability}, convert gradient errors into score errors, and, study the Newton-Schulz map.

\subsection{Setting}
\label{sec:cth-domain}
We use the Frobenius inner product and norm of Appendix~\ref{sec:numerics} for matrices and the Euclidean ones for rows. For a matrix $X$, $X_r$ is its $r$th row. For a vector $h$, $\|h\|_\infty=\max_r|h_r|$, and $\operatorname{diag}(h)$ is the diagonal matrix with diagonal $h$. The proofs rely on the Cauchy-Schwarz inequality $|\langle X,Y\rangle|\leq\|X\|\,\|Y\|$, the triangle inequality, and the reverse triangle inequality $\bigl|\|X\|-\|Y\|\bigr|\leq\|X-Y\|$ \citep[Chapter~5]{horn2013matrix}.

We keep the constants $\epsilon_{\rm cos}$, $\epsilon_{\rm score}$ and $\epsilon_{\rm NS}$ of Table~\ref{tab:method-symbols}, and we write the floor $10^{-30}$ of Equation~\eqref{eq:extended-radius} as $\tau^2$ with $\tau=10^{-15}$. The results hold for any positive values of these constants with $E\epsilon_{\rm score}\leq1$ and for any $\tau\geq0$. The case $\tau=0$ is the radius of Equation~\eqref{eq:radius}, without a floor. With the cosine $\operatorname{cos}_\epsilon(X,Y)=\langle X,Y\rangle/(\|X\|\,\|Y\|+\epsilon_{\rm cos})$ of Equation~\eqref{eq:exact-cosine}, the scores are
\begin{equation}
 a_e=[\operatorname{cos}_\epsilon(O_e,G_e)]_++\epsilon_{\rm score},\qquad
 b_{er}=[\operatorname{cos}_\epsilon(O_{er},G_{er})]_++\epsilon_{\rm score}.
 \label{eq:cth-scores}
\end{equation}
A bar denotes the mean over the family, or over the rows of one matrix for the row scores. With the capped score $\check a_e=\min(a_e,\overline a)$, its mean $\mu=\overline{\check a}$, and the row means $\nu_e=\overline b_e$, the update is
\begin{equation}
 c_e=\frac{\check a_e}{\mu},\qquad
 q_{er}=\min\Bigl(1,\frac{b_{er}}{\nu_e}\Bigr),\qquad
 h_{er}=1+\lambda(q_{er}-1),\qquad
 V_e=sc_e\Phi_{\tau,h_e}(O_e),
 \label{eq:cth-literal}
\end{equation}
where $\Phi_{\tau,h}(0)=0$ and
\begin{equation}
 \Phi_{\tau,h}(O)=\frac{\|\operatorname{diag}(h)O\|_F}{\max(\|O\|_F,\tau)}\,O\qquad(O\neq0).
 \label{eq:cth-phi}
\end{equation}

This is the update of Section~\ref{sec:method} and Appendix~\ref{sec:extended-arithmetic}. In Equation~\eqref{eq:expertgate}, $v_e=\min(1,a_e/\overline a)=\check a_e/\overline a$, so $\overline v=\mu/\overline a$ and
\begin{equation}
 c_e=\frac{v_e}{\overline v}=\frac{\check a_e}{\mu},\qquad \mu=\overline a\,\overline v .
 \label{eq:cth-cancel}
\end{equation}
The clamps of Appendix~\ref{sec:extended-arithmetic} leave these values unchanged, because $\overline a\geq\epsilon_{\rm score}$ and $\overline v\geq1/E\geq\epsilon_{\rm score}$ (Appendix~\ref{sec:properties}). For $O\neq0$, Equation~\eqref{eq:extended-radius} gives
\begin{equation}
 \alpha=\Bigl(\frac{\sum_rh_r^2\|O_r\|_2^2}{\max(\|O\|_F^2,\tau^2)}\Bigr)^{1/2}
 =\frac{\|\operatorname{diag}(h)O\|_F}{\max(\|O\|_F,\tau)},
\end{equation}
so $\alpha O=\Phi_{\tau,h}(O)$. For $O=0$, the radius is one and $\alpha O=0=\Phi_{\tau,h}(0)$. Hence $V_e=sc_e\alpha_eO_e=sc_e\Phi_{\tau,h_e}(O_e)$, where $s>0$.

The two updates in Theorem~\ref{thm:cth-stability} use the same family of experts. An expert with a zero gradient belongs to the family. An expert without a gradient in a step does not, because Compass skips it and Compass on the NorMuon update gives it NorMuon's update without Compass factors (Appendix~\ref{sec:new-details}). The matrices may be rectangular or rank deficient.

\subsection{Proof of the perturbation bound}
\label{sec:cth-stability-proof}

\begin{compasslemma}[Bounds on the family and row factors]
\label{lem:cth-normalization}
For both families, $\overline c=1$, $c_e>0$, $0<q_{er}\leq1$ and $0<h_{er}\leq1$. Also $A,B\geq\epsilon_{\rm score}$, and with $\delta_a$ and $\delta_b$ as in Theorem~\ref{thm:cth-stability},
\begin{align}
 |c_e-\widetilde c_e|&\leq\frac{\delta_a}{A}(1+\widetilde c_e),&
 \frac1E\sum_e|c_e-\widetilde c_e|&\leq\frac{2\delta_a}{A},
 \label{eq:cth-cpert}\\
 |q_{er}-\widetilde q_{er}|&\leq\frac{2\delta_b}{B},&
 \|h_e-\widetilde h_e\|_\infty&\leq\frac{2\lambda\delta_b}{B}.
 \label{eq:cth-hpert}
\end{align}
\end{compasslemma}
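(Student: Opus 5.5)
The plan is to use the closed form $c_e=\check a_e/\mu$ from Equation~\eqref{eq:cth-cancel} and the row definitions in Equation~\eqref{eq:cth-literal}. Each claim then reduces to elementary facts about means, minima and quotients, proved separately for the untilded and the tilded family; the two computations are identical.

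\emph{Sign and normalization claims.} By Equation~\eqref{eq:cth-scores}, every score satisfies $a_e\geq\epsilon_{\rm score}$ and $b_{er}\geq\epsilon_{\rm score}$. Hence $\overline a\geq\epsilon_{\rm score}$, and $\check a_e=\min(a_e,\overline a)\geq\epsilon_{\rm score}$. Averaging gives $\mu\geq\epsilon_{\rm score}$, and likewise $\nu_e=\overline b_e\geq\epsilon_{\rm score}$, so $A,B\geq\epsilon_{\rm score}>0$. It follows that $c_e=\check a_e/\mu>0$ and $\overline c=\overline{\check a}/\mu=1$. Since $b_{er}/\nu_e>0$, we get $0<q_{er}\leq1$. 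For $\lambda\in[0,1]$ we have $h_{er}=(1-\lambda)+\lambda q_{er}$; this is a convex combination of $1$ and $q_{er}$, so it lies in $(0,1]$.

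\emph{Family factor.} The map $(x,y)\mapsto\min(x,y)$ is $1$-Lipschitz for the max norm. Also $|\overline a-\overline{\widetilde a}|\leq\delta_a$, because a mean of differences bounded by $\delta_a$ is bounded by $\delta_a$. Together these give $|\check a_e-\widetilde{\check a}_e|\leq\delta_a$, and averaging gives $|\mu-\widetilde\mu|\leq\delta_a$. I will then write
\[
 c_e-\widetilde c_e=\frac{\check a_e-\widetilde{\check a}_e}{\mu}+\widetilde c_e\,\frac{\widetilde\mu-\mu}{\mu},
\]
which yields $|c_e-\widetilde c_e|\leq(\delta_a/\mu)(1+\widetilde c_e)\leq(\delta_a/A)(1+\widetilde c_e)$. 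Averaging over $e$ and using $\overline{\widetilde c}=1$ gives the bound $2\delta_a/A$.

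\emph{Row factors.} This is the step I expect to be the main obstacle. The uncapped ratios $t=b_{er}/\nu_e$ and $\widetilde t=\widetilde b_{er}/\widetilde\nu_e$ can be large, so the naive quotient expansion produces a factor $\widetilde t$ that is not bounded by one. I will resolve this by case analysis, always expanding around the smaller ratio. As before, $|\nu_e-\widetilde\nu_e|\leq\delta_b$.

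If both ratios are at least one, then $q_{er}=\widetilde q_{er}=1$ and there is nothing to prove. Otherwise, say $t\leq\widetilde t$ with $t<1$; then $|q_{er}-\widetilde q_{er}|\leq\widetilde t-t$. Write
\[
 \widetilde t-t=\frac{\widetilde b_{er}-b_{er}}{\widetilde\nu_e}+t\,\frac{\nu_e-\widetilde\nu_e}{\widetilde\nu_e}.
\]
Since $t<1$, this gives $\widetilde t-t\leq2\delta_b/\widetilde\nu_e\leq2\delta_b/B$. The case $\widetilde t\leq t$ is symmetric, with the roles of the two families exchanged and division by $\nu_e$ instead of $\widetilde\nu_e$.

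Finally, $h_{er}-\widetilde h_{er}=\lambda(q_{er}-\widetilde q_{er})$, which gives $\|h_e-\widetilde h_e\|_\infty\leq2\lambda\delta_b/B$.
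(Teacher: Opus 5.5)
Your proof is correct. For the sign and normalization claims and for the family-factor bound it matches the paper's argument step for step: the scores are bounded below by $\epsilon_{\rm score}$, the minimum is 1-Lipschitz, $|\mu-\widetilde\mu|\leq\delta_a$, the same quotient identity is used, and you average with $\overline{\widetilde c}=1$.

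The two routes differ only in the row factors. The paper never forms the uncapped ratio. It writes $q_{er}=y_{er}/\nu_e$ with $y_{er}=\min(b_{er},\nu_e)$, which mirrors $c_e=\check a_e/\mu$ exactly. The same min-Lipschitz argument gives $|y_{er}-\widetilde y_{er}|\leq\delta_b$ and $|\nu_e-\widetilde\nu_e|\leq\delta_b$. The paper then reuses your quotient identity for $c_e-\widetilde c_e$, with $\widetilde q_{er}\leq1$ playing the role of $\widetilde c_e$. This gives $|q_{er}-\widetilde q_{er}|\leq(\delta_b/B)(1+\widetilde q_{er})\leq2\delta_b/B$ with no case split. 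The obstacle you anticipated therefore disappears once the cap is applied to the numerator before dividing.

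Your case analysis is also valid and gives the same constant. Your identity for $\widetilde t-t$ is correct. In the case $t\leq\widetilde t$ with $t<1$, you have $q_{er}=t$ and $\widetilde q_{er}=\min(1,\widetilde t)\geq t$, so $0\leq\widetilde q_{er}-q_{er}\leq\widetilde t-t$ as you claim. The paper's rewrite mainly buys a uniform, case-free treatment of both factors with a single identity.
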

\begin{proof}
By the Cauchy-Schwarz inequality,
\begin{equation}
 |\operatorname{cos}_\epsilon(X,Y)|\leq\frac{\|X\|\,\|Y\|}{\|X\|\,\|Y\|+\epsilon_{\rm cos}}<1,
 \qquad\text{so}\qquad
 \epsilon_{\rm score}\leq a_e,\,b_{er}<1+\epsilon_{\rm score}.
 \label{eq:cth-score-range}
\end{equation}
Means and minima of numbers in this interval also lie in it. Hence $\overline a$, $\check a_e$, $\mu$ and $\nu_e$ lie in this interval for both families, and $A,B\geq\epsilon_{\rm score}$. Summing $c_e=\check a_e/\mu$ over the family gives
\begin{equation}
 \sum_ec_e=\frac1\mu\sum_e\check a_e=\frac{E\mu}{\mu}=E,
 \label{eq:cth-csum}
\end{equation}
so $\overline c=1$, and $c_e>0$ because $\check a_e>0$. Also $0<q_{er}\leq1$ because $b_{er}/\nu_e>0$, and $h_{er}=(1-\lambda)+\lambda q_{er}$ lies between $q_{er}$ and one because $\lambda\in[0,1]$.

A minimum changes by at most the change of its arguments. If $|x-x'|\leq\delta$ and $|y-y'|\leq\delta$, then $\min(x,y)\leq\min(x'+\delta,y'+\delta)=\min(x',y')+\delta$, and exchanging the two pairs gives
\begin{equation}
 |\min(x,y)-\min(x',y')|\leq\delta .
 \label{eq:cth-min}
\end{equation}
The same argument proves this inequality for the maximum. Averaging $|a_j-\widetilde a_j|\leq\delta_a$ over the family, Equation~\eqref{eq:cth-min} with $x=a_e$ and $y=\overline a$, and averaging again give, in turn,
\begin{equation}
 |\overline a-\overline{\widetilde a}|\leq\delta_a,\qquad
 |\check a_e-\widetilde{\check a}_e|\leq\delta_a,\qquad
 |\mu-\widetilde\mu|\leq\delta_a .
 \label{eq:cth-aerr}
\end{equation}
Since
\begin{equation}
 c_e-\widetilde c_e
 =\frac{\check a_e-\widetilde{\check a}_e}{\mu}+\widetilde{\check a}_e\Bigl(\frac1\mu-\frac1{\widetilde\mu}\Bigr)
 =\frac{\check a_e-\widetilde{\check a}_e}{\mu}+\widetilde c_e\,\frac{\widetilde\mu-\mu}{\mu}
 \label{eq:cth-cdiff}
\end{equation}
and $\mu\geq A$, we get
\begin{equation}
 |c_e-\widetilde c_e|\leq\frac{\delta_a}{\mu}+\widetilde c_e\,\frac{\delta_a}{\mu}\leq\frac{\delta_a}{A}(1+\widetilde c_e).
\end{equation}
Averaging over $e$ with $\overline{\widetilde c}=1$ gives the second bound in Equation~\eqref{eq:cth-cpert}.

For the rows of one matrix, $q_{er}=y_{er}/\nu_e$ with $y_{er}=\min(b_{er},\nu_e)$. As above,
\begin{equation}
 |\nu_e-\widetilde\nu_e|\leq\delta_b,\qquad |y_{er}-\widetilde y_{er}|\leq\delta_b .
\end{equation}
Equation~\eqref{eq:cth-cdiff}, with $y_{er}$, $\nu_e$ and $\widetilde q_{er}$ in place of $\check a_e$, $\mu$ and $\widetilde c_e$, gives
\begin{equation}
 |q_{er}-\widetilde q_{er}|\leq\frac{\delta_b}{B}\,(1+\widetilde q_{er})\leq\frac{2\delta_b}{B},
\end{equation}
because $\nu_e\geq B$ and $\widetilde q_{er}\leq1$. Finally, $h_{er}-\widetilde h_{er}=\lambda(q_{er}-\widetilde q_{er})$.
\end{proof}

\begin{compasslemma}[The radius map]
\label{lem:cth-radial}
For $\tau\geq0$, $h,\widetilde h\in[0,1]^m$ and $X,Y\in\R^{m\times n}$,
\begin{align}
 \|\Phi_{\tau,h}(X)\|_F&\leq\|X\|_F,\label{eq:cth-phi-norm}\\
 \|\Phi_{\tau,h}(X)-\Phi_{\tau,\widetilde h}(Y)\|_F
 &\leq3\|X-Y\|_F+\|Y\|_F\|h-\widetilde h\|_\infty.
 \label{eq:cth-phi-lip}
\end{align}
\end{compasslemma}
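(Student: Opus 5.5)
The plan is to handle the norm bound \eqref{eq:cth-phi-norm} directly, then split the difference in \eqref{eq:cth-phi-lip} into a change of the matrix at fixed row factors and a change of the row factors at fixed matrix. For $X\neq0$, write $\Phi_{\tau,h}(X)=\rho_h(X)\,X$ with
\[
\rho_h(X)=\frac{\|\operatorname{diag}(h)X\|_F}{N(X)},\qquad N(X)=\max(\|X\|_F,\tau).
\]
Since $h\in[0,1]^m$, we have $\|\operatorname{diag}(h)X\|_F\leq\|X\|_F\leq N(X)$. Hence $0\le\rho_h(X)\le1$, and \eqref{eq:cth-phi-norm} follows at once; it is trivial for $X=0$. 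In the same way, $\|X\|_F/N(X)\leq1$. This ratio, and not $\|Y\|_F/N(X)$, is the quantity I intend to rely on.

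For the Lipschitz bound, I would use the triangle inequality:
\[
\|\Phi_{\tau,h}(X)-\Phi_{\tau,\widetilde h}(Y)\|_F\leq\|\Phi_{\tau,h}(X)-\Phi_{\tau,h}(Y)\|_F+\|\Phi_{\tau,h}(Y)-\Phi_{\tau,\widetilde h}(Y)\|_F .
\]
The second term equals $\bigl|\|\operatorname{diag}(h)Y\|_F-\|\operatorname{diag}(\widetilde h)Y\|_F\bigr|\,\|Y\|_F/N(Y)$. By the reverse triangle inequality this is at most $\|\operatorname{diag}(h-\widetilde h)Y\|_F\leq\|h-\widetilde h\|_\infty\|Y\|_F$, which gives the last term of \eqref{eq:cth-phi-lip}.

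The first term is the main obstacle, because the factor $1/N$ can blow up when a matrix is small and $\tau$ is small or zero. The key is to decompose asymmetrically so that every division by $N(X)$ is paired with a factor $\|X\|_F$:
\[
\rho(X)X-\rho(Y)Y=\rho(Y)(X-Y)+\bigl(\rho(X)-\rho(Y)\bigr)X .
\]
The first piece has norm at most $\|X-Y\|_F$ because $\rho\le1$. For the second piece, I would expand
\[
\rho(X)-\rho(Y)=\frac{\|DX\|_F-\|DY\|_F}{N(X)}+\|DY\|_F\,\frac{N(Y)-N(X)}{N(X)N(Y)},\qquad D=\operatorname{diag}(h),
\]
and multiply by $\|X\|_F$. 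The first summand then contributes at most $\|D(X-Y)\|_F\,\|X\|_F/N(X)\leq\|X-Y\|_F$. The second contributes at most $\rho(Y)\,\bigl(\|X\|_F/N(X)\bigr)\,|N(X)-N(Y)|\leq\|X-Y\|_F$, since $N$ is $1$-Lipschitz, being the maximum of the norm and a constant. Summing the three contributions gives the factor $3$.

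The remaining cases are those where $X=0$ or $Y=0$, including $\tau=0$, and I would check them directly using $\Phi(0)=0$. If $X=0$, the difference is $-\Phi_{\tau,\widetilde h}(Y)$. By \eqref{eq:cth-phi-norm} its norm is at most $\|Y\|_F=\|X-Y\|_F$. If $Y=0$, the difference has norm at most $\|X\|_F=\|X-Y\|_F$ in the same way. Both are within the bound.
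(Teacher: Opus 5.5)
Your proof is correct, and its architecture matches the paper's. You use the same triangle-inequality split into a change of the matrix and a change of the row factors, and you bound the row-factor change with the same reverse-triangle argument. You then expand the matrix change into three terms, each costing $\|X-Y\|_F$. Your numerator term $(\|\operatorname{diag}(h)X\|_F-\|\operatorname{diag}(h)Y\|_F)X/N(X)$ is the paper's $(u_h(X)-u_h(Y))J(X)$, with $u_h(X)=\|\operatorname{diag}(h)X\|_F$ and $J(X)=X/\max(\|X\|_F,\tau)$.

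The two proofs differ only in how they expand the remainder $u_h(Y)(X/d-Y/d')$, where $d=N(X)$ and $d'=N(Y)$.

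The paper writes it as $u_h(Y)\bigl[(X-Y)/d-Y(d-d')/(dd')\bigr]$. It bounds $\|J(X)-J(Y)\|_F\le2\|X-Y\|_F/d$ and then needs $u_h(Y)\le d$ to cancel the $1/d$. This is why it first assumes, by symmetry, that $\|X\|_F\ge\|Y\|_F$, and treats $Y=0$ separately.

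You write it as $u_h(Y)\bigl[(X-Y)/d'+X(1/d-1/d')\bigr]$. This pairs $1/N(Y)$ with $u_h(Y)\le\|Y\|_F$ and $1/N(X)$ with $\|X\|_F$. Every ratio is then at most one without ordering the norms, and the cases $X=0$ and $Y=0$ reduce to the norm bound.

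Your version is slightly more direct and avoids the symmetry reduction. The paper's version isolates a self-contained Lipschitz estimate for the normalization map $J$, at the cost of that reduction.
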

\begin{proof}
Let
\begin{equation}
 u_h(X)=\|\operatorname{diag}(h)X\|_F,\qquad
 J(X)=\frac{X}{\max(\|X\|_F,\tau)}\quad(X\neq0),\qquad J(0)=0 .
\end{equation}
Then $\Phi_{\tau,h}=u_hJ$ and $\|J(X)\|_F\leq1$. For any vector $k$,
\begin{equation}
 \|\operatorname{diag}(k)X\|_F^2=\sum_rk_r^2\|X_r\|_2^2\leq\|k\|_\infty^2\|X\|_F^2 .
 \label{eq:cth-rows}
\end{equation}
With $k=h$, this gives $u_h(X)\leq\|X\|_F$ and hence Equation~\eqref{eq:cth-phi-norm}. With $k=h-\widetilde h$ and the reverse triangle inequality, it gives $|u_h(Y)-u_{\widetilde h}(Y)|\leq\|h-\widetilde h\|_\infty\|Y\|_F$, so
\begin{equation}
 \|\Phi_{\tau,h}(Y)-\Phi_{\tau,\widetilde h}(Y)\|_F=|u_h(Y)-u_{\widetilde h}(Y)|\,\|J(Y)\|_F\leq\|h-\widetilde h\|_\infty\|Y\|_F .
\end{equation}
By the triangle inequality, it remains to show that $\|\Phi_{\tau,h}(X)-\Phi_{\tau,h}(Y)\|_F\leq3\delta$ for $\delta=\|X-Y\|_F$. This difference is symmetric in $X$ and $Y$, so we may assume $\rho=\|X\|_F\geq\rho'=\|Y\|_F$. If $Y=0$, the bound follows from Equation~\eqref{eq:cth-phi-norm}, because $\|X\|_F=\delta$. Otherwise, $d=\max(\rho,\tau)$ and $d'=\max(\rho',\tau)$ are positive, and $d\geq d'$. Equation~\eqref{eq:cth-min} for the maximum and the reverse triangle inequality give
\begin{equation}
 0\leq d-d'\leq\rho-\rho'\leq\delta .
\end{equation}
From
\begin{equation}
 J(X)-J(Y)=\frac{X-Y}{d}-\frac{d-d'}{d\,d'}\,Y
\end{equation}
and $\|Y\|_F\leq d'$, we get
\begin{equation}
 \|J(X)-J(Y)\|_F\leq\frac{\delta}{d}+\frac{\|Y\|_F}{d'}\cdot\frac{d-d'}{d}\leq\frac{2\delta}{d}.
\end{equation}
Finally,
\begin{equation}
 \Phi_{\tau,h}(X)-\Phi_{\tau,h}(Y)=\bigl(u_h(X)-u_h(Y)\bigr)J(X)+u_h(Y)\bigl(J(X)-J(Y)\bigr),
\end{equation}
where $|u_h(X)-u_h(Y)|\leq u_h(X-Y)\leq\delta$ and $u_h(Y)\leq\rho'\leq d$. The norm of this difference is therefore at most
\begin{equation}
 \delta+d\cdot\frac{2\delta}{d}=3\delta .\qedhere
\end{equation}
\end{proof}

\begin{proof}[Proof of Theorem~\ref{thm:cth-stability}]
Let $F_e=\Phi_{\tau,h_e}(O_e)$ and $\widetilde F_e=\Phi_{\tau,\widetilde h_e}(\widetilde O_e)$, so that $V_e=sc_eF_e$ and $\widetilde V_e=s\widetilde c_e\widetilde F_e$. Then
\begin{equation}
 V_e-\widetilde V_e=sc_e(F_e-\widetilde F_e)+s(c_e-\widetilde c_e)\widetilde F_e,
\end{equation}
and since $c_e>0$,
\begin{equation}
 \|V_e-\widetilde V_e\|_F\leq sc_e\|F_e-\widetilde F_e\|_F+s|c_e-\widetilde c_e|\,\|\widetilde F_e\|_F .
 \label{eq:cth-thm-split}
\end{equation}
Lemma~\ref{lem:cth-normalization} gives $h_e,\widetilde h_e\in(0,1]^m$. Lemma~\ref{lem:cth-radial} with $X=O_e$ and $Y=\widetilde O_e$, together with Equation~\eqref{eq:cth-hpert}, then gives
\begin{equation}
 \|F_e-\widetilde F_e\|_F\leq3\delta_O+\frac{2\lambda R\delta_b}{B},\qquad
 \|\widetilde F_e\|_F\leq\|\widetilde O_e\|_F\leq R .
\end{equation}
We insert these bounds in Equation~\eqref{eq:cth-thm-split} and average over the family. With $\overline c=1$ and Equation~\eqref{eq:cth-cpert},
\begin{equation}
 \frac1E\sum_e\|V_e-\widetilde V_e\|_F
 \leq s\Bigl(3\delta_O+\frac{2\lambda R\delta_b}{B}\Bigr)+sR\,\frac{2\delta_a}{A},
\end{equation}
which is Equation~\eqref{eq:cth-stability}. Lemma~\ref{lem:cth-normalization} also gives $A,B>0$. The proof compares two fixed sets of inputs, so the bound also holds when the scores depend on the same gradient as the direction.
\end{proof}

\subsection{From gradient errors to score errors}
\label{sec:cth-routing}

\begin{compasslemma}[Errors of the cosine]
\label{lem:cth-cosine}
For vectors or vectorized matrices $x,y,x',y'$ and $\epsilon>0$,
\begin{equation}
 \left|\frac{\langle x,y\rangle}{\|x\|\|y\|+\epsilon}
       -\frac{\langle x',y'\rangle}{\|x'\|\|y'\|+\epsilon}\right|
 \leq\frac{2(\|y\|\|x-x'\|+\|x'\|\|y-y'\|)}{\|x\|\|y\|+\epsilon}.
 \label{eq:cth-cosine-lip}
\end{equation}
Taking positive parts and adding $\epsilon_{\rm score}$ to both cosines does not increase the left side.
\end{compasslemma}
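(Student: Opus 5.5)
The plan is to write $N=\|x\|\|y\|+\epsilon$ and $N'=\|x'\|\|y'\|+\epsilon$, and to split the difference of the two regularized cosines in the same way as Equation~\eqref{eq:cth-cdiff}:
\begin{equation*}
 \frac{\langle x,y\rangle}{N}-\frac{\langle x',y'\rangle}{N'}
 =\frac{\langle x,y\rangle-\langle x',y'\rangle}{N}
 +\frac{\langle x',y'\rangle}{N'}\cdot\frac{N'-N}{N}.
\end{equation*}
The first term is handled by writing $\langle x,y\rangle-\langle x',y'\rangle=\langle x-x',y\rangle+\langle x',y-y'\rangle$ and applying the Cauchy-Schwarz inequality. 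This bounds its numerator by $D:=\|y\|\|x-x'\|+\|x'\|\|y-y'\|$.

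For the second term, the Cauchy-Schwarz inequality gives $|\langle x',y'\rangle|/N'<1$, exactly as in Equation~\eqref{eq:cth-score-range}. It then remains to bound $|N-N'|=\bigl|\|x\|\|y\|-\|x'\|\|y'\|\bigr|$. I will telescope as
\begin{equation*}
 \|x\|\|y\|-\|x'\|\|y'\|=(\|x\|-\|x'\|)\|y\|+\|x'\|(\|y\|-\|y'\|).
\end{equation*}
The reverse triangle inequality then gives $|N-N'|\le D$. Adding the two terms yields $2D/N$, which is Equation~\eqref{eq:cth-cosine-lip}.

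The point that needs care is making the denominator come out as $N$ and not $N'$, since only $N$ appears on the right side. The splitting above achieves this: the only factor involving $N'$ is the ratio $\langle x',y'\rangle/N'$, and that ratio is bounded by one uniformly. The telescoping must also be arranged so that the weights are $\|y\|$ and $\|x'\|$, matching the statement, rather than $\|y'\|$ and $\|x\|$. Both the inner-product difference and the product-of-norms difference telescope through the same intermediate pair $(x',y)$, so they produce identical weights.

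For the final sentence, the map $t\mapsto[t]_++\epsilon_{\rm score}$ is $1$-Lipschitz, because $|[t]_+-[t']_+|\le|t-t'|$ and the additive constant cancels in the difference. This can be seen directly, or as the max case of Equation~\eqref{eq:cth-min} with $y=y'=0$. Applying this map to both cosines therefore does not increase their distance.
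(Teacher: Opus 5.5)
Your proposal is correct and follows the paper's proof essentially step for step. It uses the same splitting of the difference so that only the denominator $\|x\|\|y\|+\epsilon$ survives, the same telescoping of both the inner product and the product of norms through the intermediate pair $(x',y)$, the bound $|\langle x',y'\rangle|\le\|x'\|\|y'\|$ on the leftover ratio, and the same $1$-Lipschitz argument for $[t]_+$ via the maximum version of the min-perturbation inequality; the only difference is notation (the paper writes $N,N'$ for the inner products and $P,P'$ for the products of norms).
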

\begin{proof}
Let $N=\langle x,y\rangle$, $P=\|x\|\|y\|$, $N'=\langle x',y'\rangle$, $P'=\|x'\|\|y'\|$ and $\Delta=\|y\|\|x-x'\|+\|x'\|\|y-y'\|$. The identities
\begin{equation}
 N-N'=\langle x-x',y\rangle+\langle x',y-y'\rangle,\qquad
 P-P'=(\|x\|-\|x'\|)\|y\|+\|x'\|(\|y\|-\|y'\|)
\end{equation}
and the Cauchy-Schwarz and reverse triangle inequalities give
\begin{equation}
 |N-N'|\leq\Delta,\qquad |P-P'|\leq\Delta .
\end{equation}
Since
\begin{equation}
 \frac{N}{P+\epsilon}-\frac{N'}{P'+\epsilon}
 =\frac{N-N'}{P+\epsilon}+\frac{N'}{P'+\epsilon}\cdot\frac{P'-P}{P+\epsilon}
\end{equation}
and $|N'|\leq P'$, the left side of Equation~\eqref{eq:cth-cosine-lip} is at most
\begin{equation}
 \frac{\Delta}{P+\epsilon}+\frac{\Delta}{P+\epsilon}=\frac{2\Delta}{P+\epsilon}.
\end{equation}
The positive part $[t]_+=\max(t,0)$ satisfies
\begin{equation}
 |[t]_+-[t']_+|\leq|t-t'|
\end{equation}
by Equation~\eqref{eq:cth-min} for the maximum, and adding the same constant to two numbers does not change their difference.
\end{proof}

Lemma~\ref{lem:cth-cosine} with $x=O_e$, $y=G_e$, $x'=\widetilde O_e$, $y'=\widetilde G_e$ and $\epsilon=\epsilon_{\rm cos}$ gives
\begin{equation}
 |a_e-\widetilde a_e|
 \leq\frac{2\bigl(\|G_e\|_F\|O_e-\widetilde O_e\|_F+\|\widetilde O_e\|_F\|G_e-\widetilde G_e\|_F\bigr)}{\|O_e\|_F\|G_e\|_F+\epsilon_{\rm cos}} .
\end{equation}
For the rows, the lemma and the bounds $\|O_{er}-\widetilde O_{er}\|_2\leq\|O_e-\widetilde O_e\|_F$ and $\|G_{er}-\widetilde G_{er}\|_2\leq\|G_e-\widetilde G_e\|_F$ give
\begin{equation}
 |b_{er}-\widetilde b_{er}|
 \leq\frac{2\bigl(\|G_{er}\|_2\|O_e-\widetilde O_e\|_F+\|\widetilde O_{er}\|_2\|G_e-\widetilde G_e\|_F\bigr)}{\|O_{er}\|_2\|G_{er}\|_2+\epsilon_{\rm cos}} .
\end{equation}
The maxima of these bounds over $e$ and $r$ bound $\delta_a$ and $\delta_b$. For the direction, we fix the momentum $M_{e,t-1}$ of the previous step. Equation~\eqref{eq:expanded-refresh} gives $D_e-\widetilde D_e=(1+\beta)(G_e-\widetilde G_e)$, and Proposition~\ref{prop:cth-polynomial} below gives
\begin{equation}
 \|O_e-\widetilde O_e\|_F
 \leq\frac{2(4.25)^5(1+\beta)\|G_e-\widetilde G_e\|_F}{\min(\|D_e\|_F,\|\widetilde D_e\|_F)+\epsilon_{\rm NS}},
\end{equation}
which bounds $\delta_O$. With these bounds, Theorem~\ref{thm:cth-stability} bounds the error of the update by the errors of the clipped gradients, for any clipping rule. The bound includes the dependence of each factor on its own gradient.

The size of the bound depends on the norms and the scores. For example, $A=B=0.2$, $R=1$, $\delta_O=0$, $\delta_a=\delta_b=10^{-3}$ and $\lambda=1$ give $s(0.01+0.01)=0.02s$ in Equation~\eqref{eq:cth-stability}. Replacing $A$ and $B$ by their lower bound $\epsilon_{\rm score}$ gives a valid but much larger bound.

\subsection{The Newton-Schulz map}
\label{sec:cth-polynomial}

\begin{compassproposition}[Five Newton-Schulz iterations]
\label{prop:cth-polynomial}
Let $p(x)=3.4445x-4.7750x^3+2.0315x^5$ be the polynomial of Equation~\eqref{eq:ns}, and let $P_5=p\circ p\circ p\circ p\circ p$. Let $D\in\R^{m\times n}$ have the singular value decomposition $D=L\Sigma R^\top$ \citep[Chapter~2]{horn2013matrix} with singular values $\sigma_i$, and let $\hat\sigma_i=\sigma_i/(\|D\|_F+\epsilon_{\rm NS})$. Then
\begin{equation}
 \NS(D)=L\operatorname{diag}\bigl(P_5(\hat\sigma_i)\bigr)R^\top,
 \label{eq:cth-spectral}
\end{equation}
where $0\leq P_5(x)\leq1.21$ for $0\leq x\leq1.21$ and $P_5(x)>0$ for $x>0$. Hence $\NS(D)$ has the rank of $D$, $\|\NS(D)\|_{\rm op}\leq1.21$ and $\|\NS(D)\|_F\leq1.21\sqrt{\operatorname{rank}(D)}$. For $D'\in\R^{m\times n}$,
\begin{equation}
 \|\NS(D)-\NS(D')\|_F
 \leq\frac{2(4.25)^5\|D-D'\|_F}{\min(\|D\|_F,\|D'\|_F)+\epsilon_{\rm NS}}.
 \label{eq:cth-nslip}
\end{equation}
\end{compassproposition}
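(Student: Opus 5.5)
The plan is to establish the spectral form first, read off the range and norm statements from a scalar analysis of $p$, and then obtain the Lipschitz bound \eqref{eq:cth-nslip} by composing a normalization estimate with five scalar-Lipschitz matrix-function estimates.

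\textbf{Spectral form.} I will prove \eqref{eq:cth-spectral} by induction on the iterations of \eqref{eq:ns}. The transposes used when $m>n$ only swap the roles of $L$ and $R$, so they change nothing. The normalized input is $X_0=D/(\|D\|_F+\epsilon_{\rm NS})=L\operatorname{diag}(\hat\sigma_i)R^\top$. If $X_k=L\Sigma_kR^\top$, then $A_k=X_kX_k^\top=L\Sigma_k^2L^\top$. Hence $X_{k+1}=L\,p(\Sigma_k)R^\top$, because $p$ is odd and each term of $p$ has the form $(XX^\top)^jX$. After five steps the singular values are $P_5(\hat\sigma_i)$.

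\textbf{Scalar facts about $p$.} Write $p(x)=x\,g(x^2)$ with $g(y)=3.4445-4.7750y+2.0315y^2$.
\begin{itemize}
\item The discriminant of $g$ is $4.775^2-4(3.4445)(2.0315)<0$. So $g>0$, and therefore $p(x)>0$ for all $x>0$. Iterating gives $P_5(x)>0$ for $x>0$.
\item I will check numerically that $p$ maps $[0,1.21]$ into $[0,1.21]$. This requires evaluating $p$ at its interior critical points and at the endpoint $1.21$. Iterating then gives $0\le P_5\le 1.21$ on that interval.
\item Since $\hat\sigma_i\le\|D\|_F/(\|D\|_F+\epsilon_{\rm NS})<1$, the rank statement follows from $P_5(\hat\sigma_i)>0$ exactly when $\sigma_i>0$.
\item The operator-norm bound $\|\NS(D)\|_{\rm op}\le1.21$ and the Frobenius bound $1.21\sqrt{\operatorname{rank}(D)}$ follow directly.
\item For the Lipschitz step I need $\sup_{|x|\le1.21}|p'(x)|\le4.25$. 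The derivative is $p'(x)=3.4445-14.325x^2+10.1575x^4$. Its value at $x=1.21$ is about $4.25$, its value at $0$ is $3.4445$, and at its interior minimum $x^2\approx0.705$ it is about $-1.6$. So the bound holds.
\end{itemize}

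\textbf{Lipschitz bound.} I will split $\NS$ into normalization followed by five applications of the odd matrix function $X\mapsto p(X)$, where $p$ acts on singular values.
\begin{itemize}
\item \emph{Normalization.} Let $a=\|D\|_F+\epsilon_{\rm NS}$, $a'=\|D'\|_F+\epsilon_{\rm NS}$ and $\delta=\|D-D'\|_F$. As in the proof of Lemma~\ref{lem:cth-radial}, assume by symmetry that $a\le a'$. Then
\[
\frac{D}{a}-\frac{D'}{a'}=\frac{D-D'}{a}+\frac{D'(a'-a)}{aa'}.
\]
Using $\|D'\|_F\le a'$ and $|a'-a|\le\delta$, the norm is at most $2\delta/a=2\delta/(\min(\|D\|_F,\|D'\|_F)+\epsilon_{\rm NS})$.
\item \emph{Each iteration.} I will pass to the Hermitian dilation $\mathcal H(X)=\begin{pmatrix}0&X\\X^\top&0\end{pmatrix}$. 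Its eigenvalues are $\pm\sigma_i$ together with zeros. Because $p$ is odd, $\mathcal H(p(X))=p(\mathcal H(X))$ as an ordinary symmetric matrix function. The Frobenius-norm Lipschitz property of symmetric matrix functions then applies: $\|f(A)-f(B)\|_F\le\operatorname{Lip}(f)\|A-B\|_F$ when both spectra lie in an interval on which $f$ is Lipschitz. This follows from the Daleckii--Krein/divided-difference representation, or from Kittaneh's inequality. Since $\|\mathcal H(X)\|_F=\sqrt2\|X\|_F$, the factors $\sqrt2$ cancel. Each iteration is therefore $4.25$-Lipschitz in Frobenius norm on inputs with $\|X\|_{\rm op}\le1.21$.
\item \emph{Composition.} All iterates stay in that operator-norm ball by the scalar facts above, so the five iterations compose to $(4.25)^5$. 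Multiplying by the normalization bound gives \eqref{eq:cth-nslip}.
\end{itemize}

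\textbf{Main obstacle.} The delicate step is the per-iteration Frobenius Lipschitz constant. A naive expansion of $3.4445X-4.775XX^\top X+2.0315(XX^\top)^2X$ by the triangle inequality gives a constant near $46$, far above $4.25$. The dilation plus matrix-function Lipschitz argument is needed to reach the sharp scalar constant $\sup|p'|$. I would first try to justify the symmetric-function Lipschitz step directly: diagonalize both matrices, write the entries of $f(A)-f(B)$ as divided differences times the entries of $A-B$, and bound every divided difference by $\sup|p'|$ using the mean value theorem on $[-1.21,1.21]$. The remaining numerical checks on $p$ are routine but must be carried out carefully.
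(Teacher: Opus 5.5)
Your proposal is correct and follows essentially the same route as the paper's proof. Both arguments take the same steps: the spectral form by induction on one iteration; the discriminant and critical-point analysis of $p$ (the paper brackets the local maximum at $\xi\in(0.5545,0.5546)$ to get $p(\xi)\le 1.2028<1.21$, and $p'(1.21)=4.2448\ldots$, so your numerical checks go through, though with small margins that need rigorous bounds); the bound $|p'|\le 4.25$ on $[-1.21,1.21]$; the entrywise divided-difference argument for symmetric matrices lifted through the Hermitian dilation; and the same normalization estimate, where your assumption $a\le a'$ is not needed because $\frac{D-D'}{a}+\frac{D'(a'-a)}{aa'}$ already has norm at most $2\delta/a$ for either ordering.
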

\begin{proof}
Consider first $m\leq n$, where no transpose is taken. Let $X=L\operatorname{diag}(x_i)R^\top$ with orthogonal $L$ and $R$, where $\operatorname{diag}(x_i)$ is the $m\times n$ matrix with diagonal entries $x_i$. Then
\begin{equation}
 XX^\top=L\operatorname{diag}(x_i^2)L^\top,
\end{equation}
and one iteration of Equation~\eqref{eq:ns} maps $X$ to
\begin{equation}
 F(X)=3.4445X-4.7750(XX^\top)X+2.0315(XX^\top)^2X=L\operatorname{diag}\bigl(p(x_i)\bigr)R^\top .
 \label{eq:cth-oneiter}
\end{equation}
The first iterate is
\begin{equation}
 X_0=\frac{D}{\|D\|_F+\epsilon_{\rm NS}}=L\operatorname{diag}(\hat\sigma_i)R^\top,
\end{equation}
so five iterations give Equation~\eqref{eq:cth-spectral}. For $m>n$, the map runs on $D^\top=R\Sigma^\top L^\top$, and transposing the result gives the same formula. Repeated and zero singular values are allowed.

We write
\begin{equation}
 p(x)=x\,k(x^2),\qquad k(y)=2.0315y^2-4.7750y+3.4445 .
\end{equation}
The discriminant of $k$ is
\begin{equation}
 4.7750^2-4(2.0315)(3.4445)=-5.189382<0,
\end{equation}
so $k>0$ and $p(x)>0$ for $x>0$. The derivative is
\begin{equation}
 p'(x)=g(x^2),\qquad g(y)=10.1575y^2-14.325y+3.4445,
\end{equation}
and the discriminant of $g$ is
\begin{equation}
 14.325^2-4(10.1575)(3.4445)=65.25559>0 .
\end{equation}
The two roots $y_-<y_+$ of $g$ are positive because their sum and product are positive. The leading coefficient of $g$ is positive, so $g<0$ exactly between the roots. On $[0,\infty)$, $p$ therefore increases up to $\xi=\sqrt{y_-}$, decreases from $\xi$ to $\sqrt{y_+}$, and increases after $\sqrt{y_+}$. Hence
\begin{equation}
 p(x)\leq\max\{p(\xi),p(1.21)\}\qquad(0\leq x\leq1.21).
 \label{eq:cth-pmax}
\end{equation}
Exact evaluation gives $p'(0.5545)>0>p'(0.5546)$, so $\xi\in(0.5545,0.5546)$. The positive terms of $p$ increase with $x$ and the negative term decreases, so
\begin{equation}
 p(\xi)\leq3.4445(0.5546)-4.7750(0.5545)^3+2.0315(0.5546)^5=1.2028\ldots<1.21 .
\end{equation}
Since also $p(1.21)=0.9778\ldots<1.21$, $p$ maps $[0,1.21]$ into itself, and so does $P_5$. Together with $p(x)>0$ for $x>0$, this proves the bounds on $P_5$. By Equation~\eqref{eq:cth-spectral}, the singular values of $\NS(D)$ are the numbers $P_5(\hat\sigma_i)$. They lie in $[0,1.21]$ because $0\leq\hat\sigma_i<1$ (each $\sigma_i\leq\|D\|_F$), and they are positive exactly when $\sigma_i>0$. This gives the rank and the two norm bounds.

For Equation~\eqref{eq:cth-nslip}, we first bound $p'$. Since $g$ is convex,
\begin{equation}
 \max_{0\leq y\leq1.21^2}g(y)=\max\{g(0),g(1.21^2)\}=\max\{3.4445,\,p'(1.21)\}=4.2448\ldots,
\end{equation}
and
\begin{equation}
 \min_{y\in\R}g(y)=3.4445-\frac{14.325^2}{4\cdot10.1575}=-1.6061\ldots .
\end{equation}
Hence $|p'|\leq4.25$ on $[-1.21,1.21]$, and the mean value theorem \citep[Chapter~5]{rudin1976principles} gives
\begin{equation}
 |p(x)-p(x')|\leq4.25\,|x-x'|\qquad(x,x'\in[-1.21,1.21]).
 \label{eq:cth-plip}
\end{equation}
Let $K_1=Q_1\Theta Q_1^\top$ and $K_2=Q_2\Gamma Q_2^\top$ be symmetric matrices with orthogonal $Q_1,Q_2$ and eigenvalues $\theta_i,\gamma_j\in[-1.21,1.21]$ \citep[Chapter~2]{horn2013matrix}. Since $K_1^k=Q_1\Theta^kQ_1^\top$ for every $k$,
\begin{equation}
 p(K_1)=Q_1p(\Theta)Q_1^\top,\qquad p(K_2)=Q_2p(\Gamma)Q_2^\top .
\end{equation}
With $C=Q_1^\top Q_2$, the matrices
\begin{equation}
 Q_1^\top\bigl(p(K_1)-p(K_2)\bigr)Q_2=p(\Theta)C-Cp(\Gamma),\qquad
 Q_1^\top(K_1-K_2)Q_2=\Theta C-C\Gamma
\end{equation}
have the entries $(p(\theta_i)-p(\gamma_j))C_{ij}$ and $(\theta_i-\gamma_j)C_{ij}$. Orthogonal factors do not change the Frobenius norm, so Equation~\eqref{eq:cth-plip} gives
\begin{equation}
 \|p(K_1)-p(K_2)\|_F\leq4.25\,\|K_1-K_2\|_F .
 \label{eq:cth-symlip}
\end{equation}

For $X\in\R^{m\times n}$, the symmetric matrix
\begin{equation}
 \mathcal D(X)=\begin{pmatrix}0&X\\X^\top&0\end{pmatrix}
\end{equation}
satisfies
\begin{equation}
 \mathcal D(X)^3=\mathcal D(XX^\top X),\qquad \mathcal D(X)^5=\mathcal D\bigl((XX^\top)^2X\bigr),
\end{equation}
and $\mathcal D(X)^2$ is block diagonal with the blocks $XX^\top$ and $X^\top X$. Since $\mathcal D$ is linear,
\begin{equation}
 p(\mathcal D(X))=\mathcal D(F(X)),\qquad \|\mathcal D(X)-\mathcal D(Y)\|_F=\sqrt2\,\|X-Y\|_F .
\end{equation}
An eigenvalue $\theta$ of $\mathcal D(X)$ satisfies $\theta^2\leq\|X\|_{\rm op}^2$, because $\theta^2$ is an eigenvalue of $\mathcal D(X)^2$, whose eigenvalues are those of $XX^\top$ and $X^\top X$. If $\|X\|_{\rm op}\leq1.21$ and $\|Y\|_{\rm op}\leq1.21$, Equation~\eqref{eq:cth-symlip} therefore gives
\begin{equation}
 \|F(X)-F(Y)\|_F=\tfrac1{\sqrt2}\|p(\mathcal D(X))-p(\mathcal D(Y))\|_F\leq4.25\,\|X-Y\|_F .
 \label{eq:cth-rectlip}
\end{equation}
The $k$th iterates $X_k$ of $D$ and $X_k'$ of $D'$ have singular values in $[0,1.21]$, by the first part of the proof. Five applications of Equation~\eqref{eq:cth-rectlip} give
\begin{equation}
 \|\NS(D)-\NS(D')\|_F\leq4.25^5\,\|X_0-X_0'\|_F .
\end{equation}
With $\rho=\|D\|_F$, $\rho'=\|D'\|_F$ and $\epsilon=\epsilon_{\rm NS}$,
\begin{equation}
 X_0-X_0'=\frac{D-D'}{\rho+\epsilon}+\frac{(\rho'-\rho)\,D'}{(\rho+\epsilon)(\rho'+\epsilon)},
\end{equation}
and $\|D'\|_F=\rho'$ and $|\rho-\rho'|\leq\|D-D'\|_F$ give
\begin{equation}
 \|X_0-X_0'\|_F\leq\frac{\|D-D'\|_F}{\rho+\epsilon}+\frac{|\rho-\rho'|}{\rho+\epsilon}\leq\frac{2\|D-D'\|_F}{\rho+\epsilon} .
\end{equation}
Since $\rho\geq\min(\rho,\rho')$, this proves Equation~\eqref{eq:cth-nslip}. For $m>n$, the same argument applies to the transposes.
\end{proof}

The direction $\NS(D)$ is in general not the polar factor of $D$ \citep[Chapter~8]{higham2008functions}, whose nonzero singular values equal one. For example, a matrix of rank one with $\|D\|_F$ much larger than $\epsilon_{\rm NS}$ has $\hat\sigma_1$ close to one, and $P_5(1)=0.6964\ldots$. The results above use the actual values $P_5(\hat\sigma_i)$.

\end{document}